\documentclass[conference]{IEEEtran}
\usepackage{microtype}
\usepackage{graphicx, caption}
\usepackage{subfigure}
\usepackage{booktabs}
\usepackage{hyperref}

\usepackage{xspace, float, booktabs}
\usepackage{xcolor}

\usepackage{algorithm}
\usepackage{algorithmic}

\newcommand{\model}{f}
\newcommand{\features}{N}
\newcommand{\score}{\varphi}

\usepackage{xspace}
\newcommand{\ourmethod}{O-Shap\xspace}

\usepackage{amsmath,amssymb,amsthm,amsfonts}
\usepackage{bm}

\newtheorem{proposition}{Proposition}

\newtheorem{definition}{Definition}

\begin{document}

\title{Owen-based Semantics and Hierarchy-Aware Explanation (\ourmethod) 
}

\author{
\IEEEauthorblockN{\textbf{Xiangyu Zhou}, \textbf{Chenhan Xiao}, \textbf{Yang Weng}}
\IEEEauthorblockA{\textit{Department of Electrical, Computer and Energy Engineering}}
\IEEEauthorblockA{Arizona State University, Tempe, Arizona, United States}
Email: \{xzhou185, cxiao20, yang.weng\}@asu.edu
}

\maketitle

%Shapley value-based methods have become foundational in explainable artificial intelligence (XAI), offering theoretically grounded feature attributions through cooperative game theory. However, their assumption of feature independence limits their effectiveness in real-world settings, particularly in images, where pixels often exhibit strong spatial or semantic dependencies. To address this, we propose \underline{O}wen-based \underline{S}ematics and \underline{H}ierarchy-\underline{A}ware Ex\underline{p}lanation (\ourmethod), a model-agnostic framework that constructs a semantics-aware hierarchical segmentation of input features and computes attributions using the Owen value, a hierarchical generalization of the Shapley value. This approach preserves the axiomatic guarantees of Shapley values. Unlike axis-aligned or SLIC-based segmentation, we design a segmentation that enforces semantic consistency across hierarchy levels by satisfying the $\mathcal{T}$-property, ensuring alignment between coarse and fine segments. The resulting hierarchy allows \ourmethod to prune redundant subset evaluations, improving both interpretability and scalability. Extensive experiments on image and tabular datasets demonstrate that \ourmethod consistently outperforms state-of-the-art baselines in attribution accuracy, semantic alignment, and computational efficiency.

\begin{abstract}
Shapley value-based methods have become foundational in explainable artificial intelligence (XAI), offering theoretically grounded feature attributions through cooperative game theory.
However, in practice, particularly in vision tasks, the assumption of feature independence breaks down, as features (i.e., pixels) often exhibit strong spatial and semantic dependencies.
To address this, modern SHAP implementations now include the Owen value, a hierarchical generalization of the Shapley value that supports group attributions.
While the Owen value preserves the foundations of Shapley values, its effectiveness critically depends on how feature groups are defined.
We show that commonly used segmentations (e.g., axis-aligned or SLIC) violate key consistency properties, and propose a new segmentation approach that satisfies the $\mathcal{T}$-property to ensure semantic alignment across hierarchy levels.
This hierarchy enables computational pruning while improving attribution accuracy and interpretability.
Experiments on image and tabular datasets demonstrate that O-Shap outperforms baseline SHAP variants in attribution precision, semantic coherence, and runtime efficiency, especially when structure matters.
\end{abstract}

% ###################  INTRODUCTION  ###################
\section{Introduction}

Interpreting the predictions of machine learning models is critical in domains where trust, transparency, and accountability are essential 
\cite{lundberg2017unified}.
% \cite{lundberg2017unified, adadi2018peeking, holzinger2017we, arrieta2020explainable, tjoa2020survey}. 
While methods like SHAP~\cite{lundberg2017unified} are widely adopted for their axiomatic foundation and model-agnostic design, their core assumption of feature independence often breaks down in structured or high-dimensional datasets. In vision tasks, for example, adjacent pixels frequently belong to coherent objects, and treating them independently can distort attribution. Specifically, as illustrated in Fig.~\ref{fig:bigpic}(a), pixels (features) A, B, and C are treated equally during the Shapley value calculation, ignoring the strong correlation between B and C, which belong to the same phone object. This independence assumption leads to attribution cancellation, where the marginal contributions of B and C partially cancel each other out despite their joint relevance. Such limitations also arise in other domains, such as sensor networks in IoT \cite{xiao2023distribution, Mostafa2025, xiao2024privacy, liao2022quickest, xiao2025attack} where spatial correlations exist, and in genomics \cite{herrero2001hierarchical} where genes co-express in modules. In vision tasks, this limitation is especially problematic since spatially adjacent pixels often form semantically meaningful structures (e.g., object parts) that contribute collectively to predictions. 

\begin{figure*}
    \centering
    \includegraphics[width = 1\linewidth]{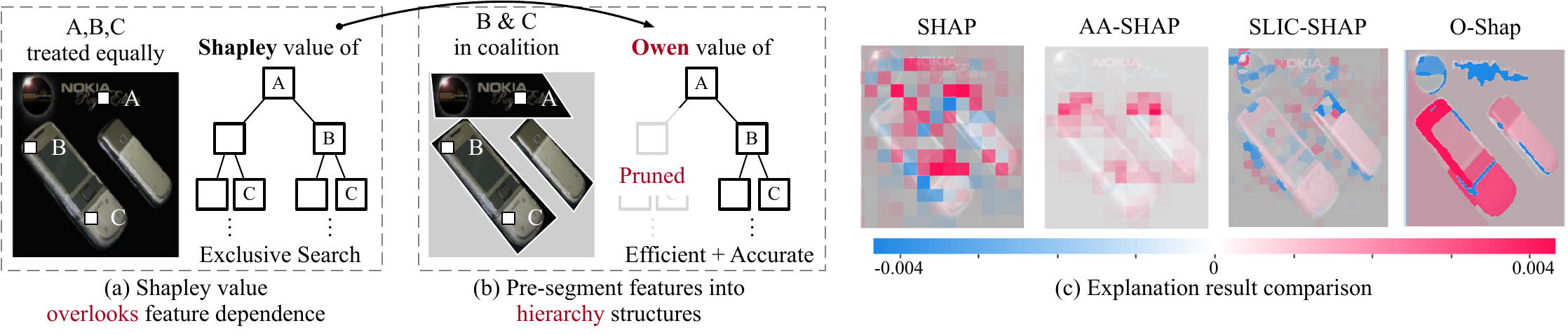}
    \caption{Overview of the motivation (a), design (b), and preliminary result (c)  of \ourmethod.}
    \label{fig:bigpic}
\end{figure*}

To address SHAP’s limitation in assuming feature independence, several extensions have been proposed. Causal SHAP\cite{heskes2020causal} incorporates causal models to compute Shapley values that respect conditional relationships among features, thereby enhancing interpretability and predictive accuracy\cite{frye2020asymmetric}. Kernel SHAP Extensions modify the sampling distribution to better reflect joint feature dependencies~\cite{aas2019explaining}. Graph-based methods, such as ShapG~\cite{zhao2024shapg}, construct feature graphs where edges encode correlations, enabling more structurally informed attributions. Tree-based approaches, such as those using conditional inference trees~\cite{redelmeier2020explaining}, impose hierarchical structures derived from learned splits to improve attribution consistency. While these methods offer valuable improvements, they often require predefined causal graphs, statistical assumptions, or domain-specific priors. Those resources are rarely available for image data. Moreover, these added complexities often come at the cost of violating the foundational axioms of Shapley values \cite{shapley1953value}, including {\it efficiency}, {\it linearity}, {\it symmetry}, and {\it dummy}.

For eliminating the reliance on extensive domain knowledge, we leverage hierarchical feature representation methods that pre-segment pixels into hierarchical structures, ensuring both informativeness and human-comprehensibility~\cite{gunning2017explainable}: prior studies suggest that users prefer explanations that strike a balance between simplicity and detail~\cite{harman1965inference,read1993explanatory,keil2006explanation}. While methods like Agglomerative Contextual Decomposition (ACD)\cite{singh2018hierarchical} effectively construct hierarchical structures in natural language processing (NLP), they depend on model-specific architectures and are not readily applicable to vision tasks. To address this gap, we propose a model-agnostic framework: {\it \underline{O}wen-based \underline{S}emantics and \underline{H}ierarchy-\underline{A}ware Ex\underline{p}lanation} ({\bf \ourmethod}), that dynamically constructs feature hierarchies via semantics-aware segmentation and computes attributions using the Owen value\cite{owen1977values}. As illustrated in Fig.\ref{fig:bigpic}(b), \ourmethod pre-organizes semantically related regions, such as the phone object and its background, to form a hierarchical structure that guides the Owen value computation. Our selection of the Owen value is motivated by its theoretical grounding: it preserves the foundational axioms of Shapley values~\cite{shapley1953value}, while allowing principled generalizations. Specifically, we reformulate the original {\it symmetry} and {\it dummy} axioms into {\it group symmetry} and {\it group dummy} \cite{owen1977values} based on the induced feature hierarchy, enabling faithful and structure-aware explanations in images.

To construct hierarchical structures over image pixels, existing segmentation strategies such as axis-aligned segmentation \cite{lundberg2017unified} (used in the official SHAP package) and Simple Linear Iterative Clustering (SLIC) \cite{6205760} (a standard superpixel method) often fail to capture semantic structure. Axis-aligned segmentation (AA-SHAP) partitions the image into uniform rectangular grids, while SLIC-based segmentation (SLIC-SHAP) clusters pixels by spatial and color proximity. Though simple and efficient, these methods are not inherently aligned with the semantic content of the image: they could fragment coherent objects or merge unrelated regions. As shown in Fig. \ref{fig:bigpic}(c), such misalignment results in noisy and imprecise attributions that fail to reflect meaningful object boundaries. In contrast, \ourmethod introduces a semantics-aware hierarchical segmentation framework designed to satisfy the $\mathcal{T}$-property of hierarchy \cite{li2022deep}, which enforces consistency between coarse- and fine-grained segments. Specifically, we generate the coarse layer using edge detection algorithm and iteratively refine finer layers through a graph-based merging algorithm that leverages attribution-aware edge weights. We prove that this procedure satisfies the $\mathcal{T}$-property, ensuring structural and semantic coherence across hierarchy levels. Empirically, \ourmethod outperforms both AA-SHAP and SLIC-SHAP, as demonstrated in Fig.~\ref{fig:bigpic}(c), where it cleanly separates screen, keyboard, and background elements.

Our main contributions are as follows:
\begin{itemize}
    \item We identify limitations of standard SHAP in vision tasks. To address this, we propose image-specific adaptations of Shapley axioms (e.g., a group-level dummy axiom) that better reflect the structure of visual data.
    \item We reveal a critical flaw in existing hierarchical SHAP methods: applying the Owen value without enforcing inter-group consistency can yield unstable or misleading attributions. To resolve this, we formalize the \emph{positive $\mathcal{T}$-Property} as a necessary condition for valid hierarchical segmentation, and design a bottom-up grouping algorithm that provably satisfies this property.
    \item We provide a formal complexity analysis showing that \ourmethod reduces the intractable cost of SHAP from $\mathcal{O}(2^{|N|})$ over all feature subsets to a polynomial-time complexity of $\mathcal{O}(|N|^2)$–$\mathcal{O}(|N|^3)$, where $|N|$ denotes the number of input features. 
\end{itemize}

O-Shap is evaluated on five image datasets and one tabular dataset, ensuring comprehensive validation across diverse data types. Extensive comparisons with seven SHAP variants across five metrics and execution time reveal that O-Shap provides superior explanations while significantly reducing computational costs. By integrating theoretical rigor through Owen-value derivation with practical efficiency via semantics-aware hierarchical segmentation, O-Shap represents a major advancement in XAI.

% ###################  Related Work  ###################
\section{Related Work}
\textbf{Challenges and Advancements in XAI Society.}\ 
Non-hierarchical explanation methods (i.e., methods that assign feature attributions without considering interdependencies or hierarchical structures), such as LIME~\cite{ribeiro2016should} and SHAP~\cite{lundberg2017unified}, have become widely used due to their simplicity and theoretical foundations. LIME relies on local linear approximations but lacks global consistency and robustness~\cite{zhang2019should, dieber2020model}. SHAP, leveraging cooperative game theory, improves upon LIME by ensuring local accuracy and consistency through Shapley values. However, its assumption of feature independence limits its ability to capture complex feature interactions, especially in high-dimensional datasets~\cite{covert2020understanding}. Various extensions, including Kernel SHAP~\cite{aas2019explaining} and Causal SHAP~\cite{heskes2020causal}, attempt to address these limitations by incorporating feature dependencies. Yet, these approaches often require predefined causal graphs or expensive joint distribution estimations, making them impractical for large-scale applications.

To overcome these challenges, hierarchical and structured explanation methods have emerged. Owen value-based techniques extend Shapley values by incorporating predefined hierarchies, allowing feature attributions to better capture structural dependencies~\cite{grabisch1999axiomatic}. Methods such as Asymmetric Shapley Values~\cite{frye2020asymmetric} and TreeSHAP~\cite{lundberg2020local} integrate causal and structural information to enhance interpretability. Additionally, graph-based methods~\cite{ying2019gnnexplainer} explicitly model feature relationships for more accurate explanations. However, these techniques often rely on domain-specific knowledge or predefined structures, restricting their adaptability across diverse datasets. In contrast, O-Shap dynamically constructs feature hierarchies based on semantic consistency, eliminating the need for prior knowledge while preserving the theoretical rigor of the Owen value. 

\textbf{Feature Pre-Segmentation into Hierarchical Structures.}
Hierarchical feature organization improves interpretability in image processing by grouping related pixels into higher-level structures, aligning with human perception. Superpixel segmentation techniques like SLIC~\cite{achanta2012slic} and Canny edge detection~\cite{canny1986computational} effectively partition images into meaningful regions, aiding object recognition~\cite{ren2003learning} and saliency detection~\cite{cheng2014global}. However, these methods lack direct integration with game-theoretic interpretability. Our O-Shap method bridges this gap by combining semantics-aware segmentation with the Owen value, enabling accurate, context-aware attributions. Unlike rigid axis-aligned segmentation, O-Shap dynamically adapts to image semantics, enhancing both precision and efficiency.

% ###################  Method  ###################
\section{Methods}
\label{sec:method}

\subsection{Notations and Shapley Value-Based XAI Method}
\label{sec:notation}

We denote the machine learning model under explanation as $\model(\cdot)$, and the input feature set by $\features = \{1, \cdots i \cdots, |\features|\}$, where $|\features|$ represents the total number of features. For vision tasks like image classification, the backbone of $\model(\cdot)$ is typically a convolutional neural network (CNN), due to its strong inductive biases for spatial locality, weight sharing, and translation invariance \cite{he2016deep}. In these tasks, each feature $i\in \features$ corresponds to a pixel $x_i$ in the input image $\bm{x}$.

Shapley value-based XAI methods assign an importance score to each feature $i$, quantifying its contribution by considering its marginal effect across all possible feature subsets:
\begin{equation}\label{eq:Shapleyvalue}
    \score_i = \sum_{S \subseteq \features\backslash \{i\}}
    \frac{1}{|\features|}\binom{|\features|-1}{|S|}
    [f_{S \cup \{i\}}(\bm{x}_{S \cup \{i\}}) - 
    f_{S}(\bm{x}_{S})],
\end{equation}
where $S \subseteq \features \setminus \{i\}$ ranges over all subsets of $\features$ excluding $i$, and the term $[f_{S \cup \{i\}}(\bm{x}_{S \cup \{i\}}) - 
    f_{S}(\bm{x}_{S})]$ 
represents the marginal contribution of feature $i$ when added to the subset $S$ \cite{shapley1953value}. The weighting factor $\frac{1}{|\features|} \binom{|\features|-1}{|S|}$ ensures a fair averaging over all subset sizes. In Eq. (\ref{eq:Shapleyvalue}), $\bm{x}_{S}$ denotes a perturbed version of the input image $\bm{x}$, where features (pixels) in subset $S$ are retained and the rest are replaced with baseline values, such as zeros or dataset means \cite{lundberg2017unified}.

The definition in Eq.~(\ref{eq:Shapleyvalue}) is not arbitrary. Rather, it emerges as the unique solution to a set of natural and desirable properties that any fair attribution method should satisfy \cite{shapley1953value}. These properties, also referred to as axioms, form the theoretical foundation of Shapley-based explanations. We formally state them below.
\begin{proposition}[Axiomatic Characterization of the Shapley Value]\label{pro:Shapley}
Let $\bm{x} \in \mathcal{X} = \{x_1, \cdots, x_i, \cdots, |\features|\}$ denote the input image,
$\features=\{1,\cdots,i,\cdots,|\features|\}$ denote the feature set, and let $f: \mathcal{X} \to \mathbb{R}$ be the model. The Shapley value $\score$ in Eq. (\ref{eq:Shapleyvalue}) is the unique attribution satisfying the following four axioms:
\begin{itemize}
\item \textbf{(Efficiency)}: The sum of feature attributions equals the total model output difference: $\sum_{i\in\features}\score_i = f(\bm{x}) - f(\bm{x}_{\varnothing})$.
\item \textbf{(Linearity)}: For any two models $\model_1, \model_2:\mathcal{X} \to \mathbb{R}$, the Shapley value of their sum equals the sum of their individual Shapley values:
\(
\score_i^{\model_1 + \model_2} = \score_i^{\model_1} + \score_i^{\model_2}, \  \forall i \in \features.
\)
\item \textbf{(Symmetry)}: If two features $i$ and $j$ contribute equally across all subsets (i.e., $\model(\bm{x}_{S \cup \{i\}}) = \model(\bm{x}_{S \cup \{j\}})$ for all $S \subseteq \features \setminus \{i, j\}$), they receive equal attribution:
\(
\score_i = \score_j.
\)
\item \textbf{(Dummy)}: If feature $i$ contributes nothing in any subset (i.e., $\model(\bm{x}_{S \cup \{i\}}) = \model(\bm{x}_{S})$ for all $S \subseteq \features \setminus \{i\}$), then it receives zero attribution:
\(
\score_i = 0.
\)
\end{itemize}
\end{proposition}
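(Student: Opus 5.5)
This is Shapley's classical characterization theorem, stated for the game $v(S) = f(\bm{x}_S)$ with $v(\varnothing) = f(\bm{x}_\varnothing)$. The proof has two halves: existence (Eq.~(\ref{eq:Shapleyvalue}) satisfies the four axioms) and uniqueness (any attribution satisfying them coincides with Eq.~(\ref{eq:Shapleyvalue})). I read the weight in Eq.~(\ref{eq:Shapleyvalue}) as the standard $\frac{1}{|\features|}\binom{|\features|-1}{|S|}^{-1} = \frac{|S|!\,(|\features|-|S|-1)!}{|\features|!}$. The statement literally writes the binomial un-inverted, and I will note that the inverse is needed for the axioms to hold.

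\textbf{Existence.} I will rewrite $\score_i$ as an average over all $|\features|!$ orderings $\pi$ of the marginal contribution $v(P_i^\pi \cup \{i\}) - v(P_i^\pi)$, where $P_i^\pi$ is the set of features preceding $i$ in $\pi$. To justify this, count the orderings with $P_i^\pi = S$: there are $|S|!\,(|\features|-|S|-1)!$ of them.
\begin{itemize}
\item \textbf{Efficiency.} For a fixed $\pi$, the marginal contributions telescope to $v(\features) - v(\varnothing) = f(\bm{x}) - f(\bm{x}_\varnothing)$. Averaging over $\pi$ preserves this.
\item \textbf{Linearity.} $\score$ is a linear functional of $v$, and $v$ depends linearly on $f$.
\item \textbf{Dummy.} Every marginal term vanishes, so $\score_i = 0$.
\item \textbf{Symmetry.} The transposition swapping $i$ and $j$ is a bijection on orderings. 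It maps each marginal contribution of $i$ to an equal marginal contribution of $j$, using the hypothesis $v(S\cup\{i\}) = v(S\cup\{j\})$ for $S \subseteq \features\setminus\{i,j\}$.
\end{itemize}

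\textbf{Uniqueness.} Note that $\score$ depends on $f$ only through $v$. Consider the vector space of games $v$ on $2^{\features}$, normalized to $v(\varnothing) = 0$; the constant $f(\bm{x}_\varnothing)$ is absorbed via efficiency. Use the unanimity basis $u_T(S) = \mathbb{1}[T \subseteq S]$ for nonempty $T \subseteq \features$. Every game decomposes as $v = \sum_T c_T u_T$, with Harsanyi dividends $c_T = \sum_{R \subseteq T} (-1)^{|T|-|R|} v(R)$, obtained by Möbius inversion.

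For a scaled game $c\,u_T$:
\begin{itemize}
\item Every $i \notin T$ is a dummy, so it receives $0$.
\item Any two players in $T$ are symmetric, so they receive equal shares.
\item Efficiency then forces each player in $T$ to receive $c/|T|$.
\end{itemize}
So any attribution satisfying the axioms is pinned down on each $c\,u_T$. Linearity, applied to the decomposition of $v$, forces $\score_i = \sum_{T \ni i} c_T/|T|$, and this is unique.

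\textbf{Main obstacle.} The delicate step is making linearity apply to arbitrary real combinations of unanimity games, including negative coefficients. The axiom is stated for sums of models, so I must check that it extends to scalar multiples, or else work directly with the additive decomposition into $c_T u_T$. I must also check that every game $v$ is realizable as some model $f$. For this I will define $f$ on the finitely many perturbed inputs $\bm{x}_S$ arbitrarily, which is possible since these inputs are distinct points (assuming baseline values differ from the pixel values), and interpret the axioms at the level of games.
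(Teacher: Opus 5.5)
Your proposal is correct. It is the classical argument (random-order averaging for existence, unanimity/Harsanyi decomposition for uniqueness) behind the result that the paper only cites from Shapley (1953) without proof. You are also right that the weight must be read as $\frac{1}{|N|}\binom{|N|-1}{|S|}^{-1}$: with the binomial as printed, efficiency already fails for the additive game $v(S)=|S|$ once $|N|\ge 3$.

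Your stated obstacle dissolves along the second route you name. Efficiency, symmetry and dummy fix the attribution of $c\,u_T$ for every real $c$, negative included, so only finite additivity is used. Separately, additivity plus dummy applied to $f_1-f_2$ whenever $v_{f_1}=v_{f_2}$ shows that any admissible attribution depends on $f$ only through $v$; that is what justifies working at the level of games.
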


The axioms in Proposition \ref{pro:Shapley} ensure that the Shapley value in Eq.~(\ref{eq:Shapleyvalue}) fairly distributes the model output among input features by accounting for all possible interactions. However, we note that some of these axioms become ill-suited when applied to image data, motivating their reformulation.

\subsection{Reformulating Axioms for Image Data: Owen Value}
\label{sec:axioms}

While the Shapley value offers a theoretically sound attribution method under its standard axioms, two of these, \textit{Symmetry} and \textit{Dummy}, are fundamentally misaligned with image data and CNN-based models. In such settings, the model output depends not only on individual feature values but also on their spatial arrangement and interactions within local neighboring pixels. As we argue below, these characteristics violate the assumptions underlying the axioms \textit{Symmetry} and \textit{Dummy}, thereby motivating a principled reformulation of them tailored to image data.

In Proposition \ref{pro:Shapley}, the symmetry axiom assumes that if two features yield equal marginal contributions across all subsets, then they should receive equal attribution. However, in CNN-based models of vision tasks, the contribution of a pixel depends on its spatial location and its interaction with neighboring pixels through local receptive fields. Even if two pixels $i$ and $j$ are visually or statistically similar, they may activate different filters due to positional differences, resulting in asymmetric marginal effects. Thus, the global condition $\model(\bm{x}_{S \cup \{i\}}) = \model(\bm{x}_{S \cup \{j\}})$ for all $S \subseteq \features \setminus \{i, j\}$ rarely holds in image-based models, rendering the symmetry axiom inapplicable in practice. 

Similarly, the dummy axiom states that any feature $i$ with zero marginal contribution in all contexts must receive zero attribution, i.e., $\model(\bm{x}_{S \cup \{i\}}) = \model(\bm{x}_{S}), \forall S \subseteq \features \setminus \{i\} \Rightarrow \score_i=0$.
However, in CNN-based models, this statement can be violated due to the structure of convolutional kernels, which aggregate information across local receptive fields. Let $G\subseteq \features$ be a spatially coherent region (e.g., a superpixel) under the convolutional kernels, a pixel $i\in G$ may exhibit $\model(\bm{x}_{S \cup \{i\}}) = \model(\bm{x}_{S}), \forall S \subseteq G \setminus \{i\}$ because the output of the convolutional filter remains stable as long as the remaining pixels in $G$ are present. That is, pixel $i$'s influence is entirely absorbed by its neighbors within the receptive field. Thus, although no individual pixel in $G$ has marginal effect, the collective presence of the group is necessary to activate the corresponding convolutional response. The dummy axiom therefore incorrectly assigns $\score_i\approx0$ for all $i\in G$, failing to reflect the non-additive, group-level semantics encoded by convolutional kernels. This structural limitation motivates a reformulation of the dummy axiom to operate at the group level, consistent with the way CNNs pool and represent information.

To address the mismatch between the standard Shapley axioms and structured feature domains such as images, we reformulate the problematic \textit{Symmetry} and \textit{Dummy} axioms in Proposition \ref{pro:Shapley} to account for spatial grouping and localized dependencies. Let $\mathcal{G}=\{G_1,\cdots,G_k,\cdots,G_K\}$ denote a fixed hierarchical partition over the feature set $\features$, i.e., $\features = \bigcup_{k=1}^K G_k$ and $G_k\cap G_{k'} = \varnothing, k\neq k'$. Each element $G_k \in \mathcal{G}$ is a group of features (e.g., a superpixel or semantic segment). We retain the original forms of the \textit{Efficiency} and \textit{Linearity} axioms, but replace the remaining two as follows:
\begin{itemize}
\item \textbf{Group Symmetry.}
For any group $G_k\in \mathcal{G}$ and any two features $i,j\in G_k$, if $\model(\bm{x}_{S\cup\{i\}})=\model(\bm{x}_{S\cup\{j\}}), \forall S\subset G_k\setminus \{i,j\}$, then their attributions are equal: $\score_i = \score_j$. This axiom restricts symmetry to structurally coherent groups, avoiding the unrealistic assumption of global feature exchangeability.
\item \textbf{Group Dummy.}  
For any group \( G_k \in \mathcal{G} \), if
\(
f(\bm{x}_{S \cup G_k}) = f(\bm{x}_{S}), \quad \forall S \subseteq \mathcal{G} \setminus G_k,
\)
then every feature in \( G_k \) receives zero attribution: \( \score_i = 0 \) for all \( i \in G_k \). This axiom allows individual features to be redundant, as long as their collective presence has predictive utility.
\end{itemize}

These group-level axioms respect the compositional nature of structured inputs, where features often act in coordination rather than isolation. By enforcing symmetry and dummy axioms within semantically meaningful regions, rather than globally, they provide a more faithful alignment between theoretical assumptions and the inductive biases of CNN-based models. We now show that under these reformulated axioms, together with the original Efficiency and Linearity axioms, the unique attribution method is given by the Owen value \cite{owen1977values}.
\begin{proposition}[Uniqueness of the Owen Value under Group-Aware Axioms]\label{pro:Owen}
Let $\bm{x} \in \mathcal{X} = \{x_1, \cdots, x_i, \cdots, |\features|\}$ denote the input image,
$\features=\{1,\cdots,i,\cdots,|\features|\}$ denote the feature set, and let $f: \mathcal{X} \to \mathbb{R}$ be the model. Let $\mathcal{G}=\{G_1,\cdots,G_k,\cdots,G_K\}$ denote a fixed partition over the feature set $\features$ such that $\features = \bigcup_{k=1}^K G_k$ and $G_k\cap G_{k'} = \varnothing, k\neq k'$. Then, the unique attribution satisfying the four axioms: \textit{Efficiency}, \textit{Linearity}, \textit{Group Symmetry} and \textit{Group Dummy}, is \cite{owen1977values, owen1981modification}
\begin{align}
    \score_i^{\text{O}} =
    \sum_{\substack{T \subseteq \mathcal{G} \setminus \{G_k\}}}
    &\frac{1}{K}\binom{K-1}{|T|}
    \sum_{\substack{S \subseteq G_k \setminus \{i\}}}
    \frac{1}{|G_k|}\binom{|G_k|-1}{|S|}\cdot \nonumber \\
    &\left[\model\left(\bm{x}_{\bigcup T\cup S \cup \{i\}}\right) - \model\left(\bm{x}_{\bigcup T\cup S}\right)\right],
    \label{eq:Owenvalue-axiom}
\end{align}
for $i\in G_k\in \mathcal{G}$, where the subscript $\text{O}$ stands for Owen value.
\end{proposition}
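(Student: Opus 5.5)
I would follow the classical route used for the Shapley value in Proposition~\ref{pro:Shapley}: verify existence first, then prove uniqueness by decomposing the game $v(S)=\model(\bm{x}_S)-\model(\bm{x}_{\varnothing})$ into unanimity games. Because the group axioms as stated in the paper are weaker than Owen's original ones, I would make them precise before starting. Group Symmetry would be read as covering exchangeable features inside a group, with the symmetry condition holding in every outer context $\bigcup T$. In addition, I would adopt Owen's quotient-game symmetry: exchangeable groups $G_k,G_{k'}$ receive equal total attribution. Group Dummy would be read as in the statement, namely that a null group receives zero attribution. In this form the axiom system is Owen's~\cite{owen1977values}, and the proposition reduces to his theorem.

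\emph{Existence.} Efficiency follows by summing Eq.~(\ref{eq:Owenvalue-axiom}) over $i\in G_k$. For fixed $T$, the inner sum is the Shapley value of the game $S\mapsto v(\bigcup T\cup S)-v(\bigcup T)$ on $G_k$. By Efficiency in Proposition~\ref{pro:Shapley}, it therefore sums to $v(\bigcup T\cup G_k)-v(\bigcup T)$. The outer sum then becomes the Shapley value of the quotient game $\bar v(T)=v(\bigcup T)$ on $\mathcal{G}$. Summing over $k$ and applying Efficiency again gives $v(\features)$. Linearity is immediate, since the formula is linear in $\model$. Group Symmetry is inherited from Shapley symmetry of the inner game, applied for each fixed $T$. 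Group Dummy holds because every bracket in the formula telescopes to zero when $G_k$ is null.

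\emph{Uniqueness.} Every game can be written as $v=\sum_{\varnothing\neq R\subseteq\features} c_R u_R$, where $u_R(S)=1$ if $R\subseteq S$ and $u_R(S)=0$ otherwise. By Linearity, it suffices to pin down $\score(c\,u_R)$ for each $R$. Fix $R$ and let $\mathcal{G}_R=\{G_k : G_k\cap R\neq\varnothing\}$.
\begin{itemize}
\item Every group outside $\mathcal{G}_R$ is a null group, so by Group Dummy it receives zero.
\item The groups in $\mathcal{G}_R$ are mutually exchangeable in the quotient game. Together with Efficiency, this forces each of them to receive total attribution $c/|\mathcal{G}_R|$.
\item Inside a group $G_k\in\mathcal{G}_R$, a feature outside $R$ is a null player in every context. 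Its attribution is zero by applying Group Dummy to singleton refinements, or equivalently by within-group null-player reasoning.
\item The features in $G_k\cap R$ are symmetric within $G_k$, so each receives $c/(|\mathcal{G}_R|\,|G_k\cap R|)$.
\end{itemize}
These values are uniquely determined and coincide with what Eq.~(\ref{eq:Owenvalue-axiom}) gives on $u_R$. A direct check confirms this: the inner Shapley value of $u_{R\cap G_k}$ assigns $1/|G_k\cap R|$ to each feature of $G_k\cap R$, and it is active only when $T\supseteq\mathcal{G}_R\setminus\{G_k\}$.

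\emph{Main obstacle.} The hard step is the axiomatic gap. The paper's Group Symmetry and Group Dummy, taken literally with conditions restricted to $S\subset G_k$, are not strong enough to force the group-level split $c/|\mathcal{G}_R|$. They also do not force zero for non-$R$ features inside active groups. I would first try to derive both from Efficiency combined with a coalition-level version of the axioms. If that fails, I would state explicitly that the axioms are to be read in Owen's original form, with symmetry holding both across groups in the quotient game and within groups for every outer context, and then cite~\cite{owen1977values}.
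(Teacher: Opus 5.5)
Your endpoint matches the paper's, whose entire proof is the citation to Owen. You are also right, where the paper is silent, that the citation applies only after the axioms are changed. But the mismatch runs the opposite way from your diagnosis, and your existence step inherits the error: the paper's axioms are not merely too weak, because the Owen value violates them. Take $G_1=\{1,2\}$, $G_2=\{3\}$.

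For $v=u_{\{1\}}-u_{\{2\}}$, $G_1$ is null in the paper's sense, since $v(G_1)=v(\varnothing)$ and $v(G_1\cup G_2)=v(G_2)$. Yet Eq.~(\ref{eq:Owenvalue-axiom}) gives $\varphi^{\text{O}}_1=1$ and $\varphi^{\text{O}}_2=-1$. So your claim that every bracket telescopes to zero is false. Only the sum of the brackets over $i\in G_k$ telescopes, which yields a zero group total, not the per-feature zero that Group Dummy demands.

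For $v=u_{\{1,3\}}$, the paper's Group Symmetry hypothesis for $1,2\in G_1$ involves only $S=\varnothing$, i.e.\ $v(\{1\})=v(\{2\})$, and it holds. Yet $\varphi^{\text{O}}_1=\tfrac12\neq 0=\varphi^{\text{O}}_2$. Your symmetry check works only for your strengthened hypothesis, symmetry in every outer context. So the proposition as literally stated is false. Your fallback of deriving the missing pieces from Efficiency plus coalition-level axioms therefore cannot succeed: the axioms must be replaced, not supplemented.

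The system you actually write down is still not Owen's. Your third uniqueness bullet needs an individual null-player axiom, which is absent from your list. ``Group Dummy on singleton refinements'' is unavailable because the partition is fixed.

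Without that axiom, uniqueness genuinely fails. Let $\psi_i=\mathrm{Sh}_k(\bar v)/|G_k|$ for $i\in G_k$, i.e.\ the Shapley value of the quotient game $\bar v(T)=v(\bigcup T)$ split equally within each group. It satisfies Efficiency, Linearity, within-group symmetry in any reading (all members of a group receive the same value), quotient symmetry, and Group Dummy in either reading. Yet on $u_{\{1,3\}}$ it gives $\psi_1=\psi_2=\tfrac14$, which differs from the Owen value. It even satisfies all four of the paper's literal axioms, which the Owen value does not.

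The correct repair is Owen's own system: Efficiency, Additivity, individual Null Player, symmetry within unions, and symmetry of unions in the quotient game. Group Dummy should be dropped or read as a statement about the group's total. Under that system your unanimity-game argument goes through as written.

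Finally, your existence step tacitly uses the Shapley weights $\frac1K\binom{K-1}{|T|}^{-1}$ and $\frac{1}{|G_k|}\binom{|G_k|-1}{|S|}^{-1}$. As typeset, Eq.~(\ref{eq:Owenvalue-axiom}) puts the binomials in the numerator. Those weights do not sum to one once $K\ge 3$ or $|G_k|\ge 3$, and Efficiency then fails. You should state this correction explicitly as well.
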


This result is well established in cooperative game theory, and we refer readers to Owen~\cite{owen1977values} for the full proof.

\subsection{Owen Value under Multi-Layer Hierarchy}
\label{sec:owenvalue}

The Owen value in Eq. (\ref{eq:Owenvalue-axiom}) represents a unique attribution solution under a single-level hierarchical structure. We now extend this formulation to accommodate multi-layer hierarchies while preserving the same axiomatic foundations. This hierarchical generalization allows us to capture both fine-grained pixel-level details and coarse semantic groupings in images. 

Consider an image $\bm{x} = \{x_1, \cdots, x_{|\features|}\}$, whose features are organized hierarchically from coarse to fine granularity. At level 1, the image is segmented into coarse coalitions: 
\(
\mathcal{G}^1 = \{G^1_1, \cdots G^1_k \cdots, G^1_{K^1}\}
\)
where each $G^1_k$ is a coalition comprising multiple pixels, and 
\(
\bm{x} = \bigcup_{k=1}^{K^1} G^1_k
\)
,
\(G^1_k\cap G^1_{k'} = \varnothing\) for \( k\neq k'\). Each such coalition $G^1_k$ is further decomposed at level 2 as
\(
\mathcal{G}^2_k = \{G^2_1,\cdots, G^2_{K^2_k}\}
\)
and this recursive partitioning continues until level $L$, where each element corresponds to an individual pixel. To illustrate this, Fig.~\ref{fig:carseg} presents a 3-level ($L=3$) segmentation hierarchy of a car image. At level 1, the image is divided into two main coalitions: the car object ($G^1_1$) and the background ($G^1_2$). At level 2, the car object ($G^1_1$) is further decomposed into three semantically meaningful components ($G^2_1,G^2_2,G^2_3$), capturing distinct structural features. At level 3, the segmentation reaches pixel-level granularity.

\begin{figure}[H]
    \begin{center}
    \vskip -0.1in
    \includegraphics[width = 1\linewidth]{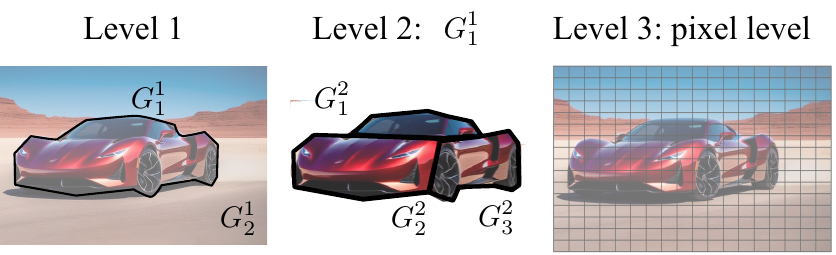}
    \end{center}
    \vskip -0.15in
    \caption{Example of the hierarchical structure of an image.}
    \vskip -0.1in
    \label{fig:carseg}
\end{figure}

In a $L$-level hierarchy structure, let pixel $x_i$ (at level $L$) be nested within
a sequence of enclosing coalitions:
$x_i \in G^{L-1} \subseteq \cdots \subseteq G^1$, where $G^l\in \mathcal{G}^l, l=1,\cdots,L-1$ is a coalition at level $l$. For simplicity, we omit coalition subscripts without loss of generality. The Owen value of the pixel $x_i$ is calculated as follows:
\begin{align}
    \score_i^{\text{O}} =
    &\sum_{\substack{S_1 \subseteq \mathcal{G}^1 \setminus \{G^1\}}} \cdots 
    \sum_{\substack{S_{L-1} \subseteq \mathcal{G}^{L-1} \setminus \{G^{L-1}\}}}
    \sum_{\substack{S_L \subseteq G^{L-1} \setminus \{x_i\}}}\nonumber \\
    &
    \prod_{l=1}^{L-1} \frac{1}{|\mathcal{G}^l|} 
    \binom{|\mathcal{G}^l| - 1}{|S_l|}
    \cdot \frac{1}{|G^{L-1}|} 
    \binom{|G^{L-1}| - 1}{|S_L|} \cdot\nonumber \\
    &
    \left[
    f\left( \bigcup_{l=1}^{L} S_l \cup \{x_i\} \right) - 
    f\left( \bigcup_{l=1}^{L} S_l \right)
    \right].
    \label{eq:Owenvalue-multi-level}
\end{align}

Compared to Eq. (\ref{eq:Owenvalue-axiom}), this generalized formulation in Eq. (\ref{eq:Owenvalue-multi-level}) extends the Owen value calculation to multi-level hierarchical structure, ensuring that feature contributions are accurately represented from higher-level coalitions down to individual pixels. Beyond preserving the axiomatic guarantees in Proposition~\ref{pro:Owen}, the hierarchical formulation of Eq. (\ref{eq:Owenvalue-multi-level}) provides substantial computational advantages over standard Shapley-based methods. In particular, SHAP suffers from exponential complexity due to its exhaustive evaluation over all $2^{|N|-1}$ possible feature subsets, where $|N|$ is the number of features. This makes the exact SHAP calculation impractical for high-dimensional inputs such as images. In contrast, Eq. (\ref{eq:Owenvalue-multi-level}) leverages the hierarchy structure to prune the search space, avoiding redundant evaluations. The resulting complexity is analyzed in Proposition \ref{pro:owen-complexity}.

\begin{proposition}[Computational Efficiency] 
\label{pro:owen-complexity}
Let $|N|$ denote the number of input features. Computing the Shapley value (Eq. (\ref{eq:Shapleyvalue})) requires an exponential complexity of $\mathcal{O}(2^{|N|})$. In the hierarchical Owen value (Eq.~(\ref{eq:Owenvalue-multi-level})), the complexity reduces to $\mathcal{O}(2^{\sum_{l=1}^L |\mathcal{G}^l|})$. Specifically, for a balanced partition where $|\mathcal{G}^l|\equiv n, l=1,\cdots,L$, the total computational cost simplifies to $\mathcal{O}(|N|^{n\cdot\log_{n}{2}})$, which is polynomial in $|N|$.
\end{proposition}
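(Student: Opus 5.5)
The plan is to count the number of model evaluations (coalition pairs $f(\cdot\cup\{x_i\})$ and $f(\cdot)$) that each formula enumerates, treating one evaluation of $f$ as unit cost. For the Shapley value in Eq.~(\ref{eq:Shapleyvalue}), the sum ranges over all $S\subseteq \features\setminus\{i\}$. That gives $2^{|\features|-1}$ terms, each needing two evaluations of $f$, so the cost per feature is $\Theta(2^{|\features|})$. Repeating this for all $|\features|$ features changes only a polynomial factor, and in any case all $2^{|\features|}$ coalitions can be evaluated once and cached. The Shapley bound $\mathcal{O}(2^{|\features|})$ follows.

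For the hierarchical Owen value in Eq.~(\ref{eq:Owenvalue-multi-level}), I would observe that the summation is a Cartesian product of independent index sets. At level $l\le L-1$, $S_l$ ranges over subsets of the sibling coalitions $\mathcal{G}^l\setminus\{G^l\}$, which gives $2^{|\mathcal{G}^l|-1}$ choices. At the last level, $S_L$ ranges over subsets of $G^{L-1}\setminus\{x_i\}$, which gives $2^{|G^{L-1}|-1}$ choices. Here $|G^{L-1}|$ plays the role of $|\mathcal{G}^L|$, the number of children at the finest level. Multiplying these counts gives $\prod_{l=1}^{L}2^{|\mathcal{G}^l|-1}\le 2^{\sum_{l=1}^L|\mathcal{G}^l|}$ terms, each with two evaluations of $f$. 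This yields $\mathcal{O}(2^{\sum_l|\mathcal{G}^l|})$ per feature. As before, summing over features or caching shared coalitions costs at most a polynomial factor.

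For the balanced case, I would use that every node has exactly $n$ children and the leaves are the pixels. Then $|\features|=n^L$, so $L=\log_n|\features|$ and $\sum_{l=1}^L|\mathcal{G}^l|=nL$. The cost becomes
\[
2^{nL}=\bigl(2^{\log_n|\features|}\bigr)^{n}=\bigl(|\features|^{\log_n 2}\bigr)^{n}=|\features|^{n\log_n 2},
\]
using the identity $2^{\log_n a}=a^{\log_n 2}$. For fixed branching $n$, this exponent is a constant, so the cost is polynomial in $|\features|$. For example, $n=2$ gives $|\features|^2$ and $n=4$ gives $|\features|^{4\log_4 2}=|\features|^2$. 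Including the outer loop over the $|\features|$ features raises the exponent by one, which matches the $\mathcal{O}(|\features|^2)$–$\mathcal{O}(|\features|^3)$ range claimed in the introduction.

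The main obstacle is interpretive rather than technical. The notation $|\mathcal{G}^l|$ in Eq.~(\ref{eq:Owenvalue-multi-level}) must be read as the number of siblings of the enclosing coalition at level $l$, not the total number of coalitions at level $l$; otherwise the product-of-sums count does not give the stated bound. I would also state explicitly that "balanced" means a uniform branching factor $n$ at every level, including the last level of pixels, so that $|\features|=n^L$ holds exactly. If the hierarchy is only approximately balanced, I would bound each sibling set by $n$ and each depth by $\lceil\log_n|\features|\rceil$, which gives the same order up to constants.
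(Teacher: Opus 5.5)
Your proposal is correct and follows essentially the same route as the paper. You bound the multi-level Owen sum by $\prod_{l}2^{|\mathcal{G}^l|}=2^{\sum_l|\mathcal{G}^l|}$, then use $|N|=n^L$ (i.e.\ $L=\log_n|N|$) to rewrite $2^{nL}$ as $|N|^{n\log_n 2}$; your branching-factor reading of $|\mathcal{G}^l|$ is what the paper's assumption $|\mathcal{G}^l|\equiv|N|^{1/L}=n$ tacitly requires. One aside outside the statement is off: the paper gets its $\mathcal{O}(|N|^2)$--$\mathcal{O}(|N|^3)$ range from the per-pixel exponent $n\log_n 2$ lying roughly between $2$ and $3$ for $n\in\{2,\dots,10\}$, not from adding an outer loop over features, which would push the upper end to about $|N|^4$.
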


\begin{proof}
Let $|\mathcal{G}^l|$ denote the number of coalitions at level $l$, then the total number of subset evaluations of Eq.~(\ref{eq:Owenvalue-multi-level}) is 
\(
\mathcal{O}\left( \prod_{l=1}^{L} 2^{|\mathcal{G}^l|} \right) = \mathcal{O}\left( 2^{\sum_{l=1}^{L} |\mathcal{G}^l|} \right).
\)
Assuming a balanced hierarchy where each level contains $|\mathcal{G}^l| \equiv |N|^{1/L} = n$ coalitions, we have:
\begin{align}
    \mathcal{O}\left( 2^{\sum_{l=1}^{L} |\mathcal{G}^l|} \right) &= 
    \mathcal{O}\left( 2^{L\cdot n} \right) = \mathcal{O}\left( 2^{\log_n|N|\cdot n} \right)\nonumber \\
    &= \mathcal{O}\left( 2^{\frac{\log_2|N|}{\log_2n}\cdot n} \right)
    = \mathcal{O}\left(|N|^{n\cdot\log_{n}{2}}\right) \label{eq:owen-complexity}. \qedhere
\end{align}
\end{proof}

To illustrate the computational benefit, consider computing the exact Shapley value for $|N|=50$ features. This requires  $2^{50} \approx 1.12e^{16}$ times of evaluation. In contrast, a 3-level hierarchy with partition sizes $2\times5\times5=50$ reduces the computation to $2^{2+5+5}=2^{12}=4096$ times of evaluation.
Moreover, for balanced partitions where each level contains $n$ elements, the exponent term $n\cdot\log_n{2}$  in Proposition \ref{pro:owen-complexity} increases slowly from $2$ to $3$ as $n$ ranges from $2$ to $10$. As a result, the overall complexity of \ourmethod remains polynomial in practice, typically falling between $\mathcal{O}(|N|^2)$ and $\mathcal{O}(|N|^3)$.

\subsection{A Semantics-Aware Segmentation}
\label{sec:segmentation}

The Owen value in Eq. (\ref{eq:Owenvalue-multi-level}) critically depends on a well-defined hierarchical structure $\mathcal{G}_1,\cdots,\mathcal{G}_L$ that faithfully captures inter-feature relationships. However, existing segmentation strategies commonly used in XAI, such as axis-aligned partitioning \cite{lundberg2017unified} (as implemented in the official SHAP package) and Simple Linear Iterative Clustering (SLIC)~\cite{6205760}, fall short in this regard. Axis-aligned segmentation imposes a rigid grid over the image, disregarding content boundaries, while SLIC forms superpixels based on low-level features like color and spatial proximity. Though computationally efficient, both approaches lack semantic understanding and often result in either the fragmentation of coherent objects or the merging of semantically unrelated regions.

To ensure that the hierarchical segmentation used in Eq.(\ref{eq:Owenvalue-multi-level}) faithfully reflects the underlying semantics of an image, we adopt the following established definitions of \textit{hierarchical semantics consistency} \cite{bi2011multi, li2022deep} in image applications.

\begin{definition}[Positive $\mathcal{T}$-Property]
\label{def:T-property}
Let $G_i \in \mathcal{G}_i$ and $G_j \in \mathcal{G}_j$ be segments from different hierarchical levels such that $G_i \subseteq G_j$. If the model $\model(\cdot)$ assigns a positive label to $G_i$ for a given category, then $G_j$ must also be assigned a positive label for the same category.
\end{definition}

% \begin{definition}[Positive $\mathcal{T}$-Constraint]
% \label{def:T-property-euiv}
% For segments $S_i \in \mathcal{G}_i$ and $S_j \in \mathcal{G}_j$ with $S_i \subseteq S_j$, it holds that $\model(S_j) \geq \model(S_i)$ if $S_i$ is labeled positive.
% \end{definition}

% Definition \ref{def:T-property-euiv} provides a sufficient condition~\cite{li2022deep} that can be evaluated mathematically, serving as a theoretical criterion for segmentation validity in hierarchical feature attribution. 

Definition~\ref{def:T-property} \cite{li2022deep} (also known as positive $\mathcal{T}$-property) enforces semantic consistency by ensuring that if a child segment is recognized by the model as belonging to a particular category, e.g., identified as a dog because it contains a dog’s face, then its parent segment, which encompasses the same content, must not contradict this decision. However, not all segmentation methods used in XAI satisfy these semantic requirements. For example, axis-aligned segmentation, which divides the image into uniform rectangular grids, fails to guarantee semantically meaningful structures. Such partitions are often misaligned with object boundaries, leading to inconsistent labels across the hierarchy. This shortcoming is formally stated as follows.

\begin{proposition}\label{cor:grid_violation}
Axis-aligned segmentation does not guarantee satisfaction of the positive $\mathcal{T}$-Property.
\end{proposition}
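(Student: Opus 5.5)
Because the statement is a non-guarantee claim, we prove it by counterexample. We will exhibit one model $\model$, one image $\bm{x}$, and one axis-aligned hierarchy such that a child segment $G_i \subseteq G_j$ is labeled positive while its parent $G_j$ is not. This violates Definition~\ref{def:T-property}. The key observation is that axis-aligned segmentation is fixed by the image geometry, not its content. An object can therefore be cut by grid lines. The parent cell containing the object-bearing child can then also contain distractor content that flips the model's decision.

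\textbf{Construction.} Take a square image partitioned at level 1 into four quadrants; one quadrant is $G_j$. At level 2, split $G_j$ into four sub-cells; one of them is $G_i$. Place a small discriminative pattern $P$ (say, a patch characteristic of category $c$) entirely inside $G_i$. Place a competing pattern $Q$ (characteristic of a different category $c'$) inside $G_j \setminus G_i$.

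\textbf{Model.} Define $\model$ through a labeling rule on masked inputs $\bm{x}_S$, where pixels outside $S$ are set to the baseline as in Section~\ref{sec:notation}. Let $s_c(\bm{x}_S)$ be a score detecting $P$ and $s_{c'}(\bm{x}_S)$ a score detecting $Q$, with $Q$ weighted more heavily. The label is positive for $c$ iff $s_c(\bm{x}_S) > s_{c'}(\bm{x}_S)$.

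\textbf{Verification.} On $\bm{x}_{G_i}$ only $P$ is present, so the label is positive for $c$. On $\bm{x}_{G_j}$ both patterns are present, and $Q$ dominates, so the label is negative for $c$. Hence the child is positive while its parent is not, and the property fails.

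This model is admissible since the paper places no restriction on $\model$. If one wants a CNN-flavored example, a linear filter response followed by a max-pooled softmax realizes the same behavior. It suffices to choose weights so that the response to $Q$ exceeds that to $P$. Finally, the grid is determined by $(\text{height},\text{width})$ and the split depth alone. For any such grid, the patterns can therefore be placed adversarially in the corresponding cells. So no choice of axis-aligned hierarchy parameters rescues the guarantee.

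\textbf{Main obstacle.} The delicate point is that Definition~\ref{def:T-property} is phrased in terms of labels the model assigns to segments, which the paper does not fully formalize. I will fix the interpretation that the label of a segment $G$ is the model's decision on the masked input $\bm{x}_G$, with the baseline fill of Section~\ref{sec:notation}. Under that reading, the counterexample only requires the model to be non-monotone under adding pixels. This is exactly what content-blind grid boundaries cannot prevent, because they freely mix unrelated semantic regions into one parent cell.

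I would add a remark contrasting this with the paper's own method, whose segments are to be built to respect object boundaries. The failure above hinges on $P$ and $Q$ belonging to different semantic objects yet sharing a parent cell.
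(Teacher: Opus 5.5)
Your counterexample is correct and is essentially the paper's own argument made concrete: the paper also has the child cell capture the discriminative object while the content-blind parent cell adds contradictory content, and your explicit $P$, $Q$ and masked-input labeling rule turn its ``may be negative'' step into an actual construction. One caution about your proposed closing remark: the construction uses only the non-monotonicity of $f$, so it breaks equally any hierarchy that nests a segment containing $P$ alone inside a segment containing $Q$ — including the paper's own hierarchy, whose merging ends with the whole image — and therefore it does not support the contrast with semantics-aware segmentation that you suggest.
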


\begin{proof}
Let $G_i \subseteq G_j$ be two nested segments generated by an axis-aligned partitioning scheme. Suppose $G_i$ captures a semantically coherent feature (e.g., a dog's face) and is assigned a positive label by the model: $\model(G_i) = 1$. Since axis-aligned segmentation divides the image without regard to semantic boundaries, $G_j$ may include irrelevant or contradictory regions (e.g., background clutter) that dilute the semantic signal. As a result, the model's aggregated prediction for $G_j$ may be negative: $\model(G_j) = 0$. This violates the positive $\mathcal{T}$-Property.
\end{proof}

To overcome this limitation, we propose a hierarchical segmentation framework explicitly designed to satisfy the positive $\mathcal{T}$-Property. Our approach consists of two steps: initial edge-based segmentation followed by semantic-aware graph-based merging.
\begin{itemize}
    \item Initial Segmentation. The bottom-level hierarchy $\mathcal{G}^L$ is generated using Canny edge detection~\cite{rong2014improved}. Each segment $G_i \in \mathcal{G}^L$ corresponds to a connected component enclosed by detected edges, ensuring that initial segments align with strong local image gradients.
    \item Hierarchical Merging. Higher-level segmentations $\mathcal{G}^{L-1}, \dots, \mathcal{G}^1$ are generated by iteratively merging segments based on attribution similarity. At each level $l$, we construct a graph $(V_l, E_l)$, where nodes correspond to segments in $\mathcal{G}^l$ and edges connect spatially adjacent segments. The edge weight between spatially adjacent segments $G_i$ and $G_j$ is defined as:
    \(
    w(e_{ij}) = |\model(G_i) - \model(G_j)|
    \)
    where $\model(G_i)$ denotes the model attribution score when only pixels in $G_i$ are retained and others are masked by a mean value. The merging process seeks a segmentation $\mathcal{G}^{l-1}$ that minimizes inter-segment semantic disparity:
    \begin{equation}\label{eq:merge}
    \min_{\mathcal{G}^{l-1}} \sum_{e_{ij} \in E_l} w(e_{ij}) \cdot \mathbb{I}[G_i \text{ and } G_j \text{ are merged}],
    \end{equation}
    where the indicator function $\mathbb{I}$ encodes the merging decisions. This merging proceeds until the entire image is represented as a single segment at the topmost level. By merging only segments with similar semantic attributions, the resulting hierarchy maintains semantic coherence and enforces the $\mathcal{T}$-property, as shown in Proposition
    \ref{cor:hierarchical_satisfaction}.
\end{itemize}

\begin{proposition}[Satisfaction of the $\mathcal{T}$-Property]
\label{cor:hierarchical_satisfaction}
The hierarchical segmentation constructed by Eq.~(\ref{eq:merge}) with $w(e_{ij}) = |\model(G_i) - \model(G_j)|$ satisfies the positive $\mathcal{T}$-Property.
\end{proposition}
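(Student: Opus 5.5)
The plan is to argue by induction on the hierarchy levels, from the bottom level $\mathcal{G}^L$ upward. It suffices to check the positive $\mathcal{T}$-Property for consecutive levels: every segment $G_j\in\mathcal{G}^{l-1}$ is a union of children in $\mathcal{G}^l$, and containment across non-adjacent levels factors through a chain of parent--child relations. Since ``assigned a positive label'' is transitive along such a chain, the general statement in Definition~\ref{def:T-property} then follows. The working form of the property will be the sufficient condition of~\cite{li2022deep}: if a child $G_i$ is labeled positive, i.e.\ $\model(G_i)\ge\tau$ for the decision threshold $\tau$, then its parent $G_j\supseteq G_i$ satisfies $\model(G_j)\ge\tau$ as well.

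\textbf{Step 1: structure of the minimizer.} First I would characterize a minimizer of Eq.~(\ref{eq:merge}). Merges happen only along edges of the adjacency graph $(V_l,E_l)$, and each merge is charged $w(e_{ij})=|\model(G_i)-\model(G_j)|$. An optimal segmentation therefore groups only segments connected through low-weight edges, much like a minimum-spanning-forest / Kruskal-type merge. I would make this precise with a threshold $\epsilon_l$: within any parent $G_j$, every pair of adjacent children has $|\model(G_i)-\model(G_{i'})|\le\epsilon_l$. Hence all children of one parent share the same side of $\tau$ up to $\epsilon_l$. I would choose $\epsilon_l$ small relative to the margin $\min_i|\model(G_i)-\tau|$, which makes every parent \emph{label-homogeneous}: if one child is positive, all children are positive.

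\textbf{Step 2: from positive children to a positive parent.} Next I would show that a union of positive, semantically similar children is itself positive. The model is evaluated on the parent by retaining its pixels and masking the rest with the mean value. I would invoke a monotonicity assumption on $\model$ for fixed evidence: adding pixels whose isolated score is at least $\tau$ does not push the score below $\tau$. This is exactly the failure mode exhibited in Proposition~\ref{cor:grid_violation}, where the parent contains ``contradictory'' regions that dilute the signal. Step~1 rules such regions out, because every child in the parent has a score within $\epsilon_l$ of a positive one and so carries no contradictory evidence. Consequently $\model(G_j)\ge\min_{G_i\subseteq G_j}\model(G_i)\ge\tau$, and the induction on $l$ closes, continuing up to the single top segment.

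\textbf{Main obstacle.} The main obstacle is Step~2, since nothing forces a general neural network to satisfy $\model(G_i\cup G_{i'})\ge\min\{\model(G_i),\model(G_{i'})\}$. The $\mathcal{T}$-property is really a statement about $\model$, not about the segmentation alone. I would first try to state this as an explicit, mild assumption: local monotonicity, or superadditivity of evidence for regions of the same category. Under it the merging rule provably avoids mixing positive and negative regions, which is all the construction can control. If that is unacceptable, the fallback is to restrict the claim to the merging rule with a hard constraint $w(e_{ij})\le\epsilon_l$ and to define the parent's label by its children's homogeneous label. In that case the property holds by construction.
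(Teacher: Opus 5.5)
Your skeleton matches the paper's: the merging rule controls the children's scores, and an aggregation property of $\model$ carries positivity up to the parent. The aggregation step, however, is different. The paper fixes a positive child $G_i$ and asserts $|\model(G_i)-\model(G_{i_t})|\le\epsilon$ for \emph{every} segment merged with it. It then posits that $\model(G_j)$ is approximately the average $\frac{1}{m}\sum_t \model(G_{i_t})$, which gives only $\model(G_j)\ge\tau-\epsilon$, and switches to a new threshold $\tau'$. You instead derive label-homogeneity of each parent from adjacency plus an $\epsilon_l$ below the score margin. You also replace the averaging identity by the weaker hypothesis $\model(A\cup B)\ge\min\{\model(A),\model(B)\}$, which averaging implies. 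This buys three things:
\begin{itemize}
\item It handles chaining correctly: a Kruskal-type merge only bounds \emph{adjacent} merged pairs, not all pairs against $G_i$.
\item It yields $\model(G_j)\ge\tau$ with the same threshold. The paper's passage to $\tau'$ changes the labeling rule, which Definition~\ref{def:T-property} implicitly keeps the same at both levels.
\item It makes explicit that the proposition needs a hypothesis on $\model$. The paper's ``approximately the average'' supplies that hypothesis only implicitly.
\end{itemize}
Your reduction of non-adjacent levels to consecutive ones also fills a step the paper omits.

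Two caveats apply to the paper's argument as well. First, Eq.~(\ref{eq:merge}) read literally is minimized by merging nothing, since all weights are nonnegative. So Step~1 cannot come from optimality; state the threshold $\epsilon_l$ as part of the construction. Second, homogeneity at every level contradicts your closing claim that the induction runs up to the single top segment. If the image has both positive and negative bottom segments, some parent must contain both, and there your min-bound certifies nothing. Handle the last merge separately: for instance, once threshold merging stalls, join the remaining segments directly into $\bm{x}$ and use $\model(\bm{x})\ge\tau$ for the category being explained. Your fallback of labeling a parent by its children's common label would not establish the stated property, because Definition~\ref{def:T-property} concerns the label assigned by $\model$.
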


\begin{proof}
Let $G_i \in \mathcal{G}^l$ and $G_j \in \mathcal{G}^{l-1}$ such that $G_i \subseteq G_j$, i.e., $G_j$ is formed by merging a subset of segments at level $l$ that includes $G_i$. Assume $G_i$ is labeled positive, meaning $\model(G_i) \geq \tau$ for some positive threshold $\tau$. From the merging criterion in Eq.~(\ref{eq:merge}), segments are merged only if their attribution scores are similar, i.e., $|\model(G_k) - \model(G_{k'})|$ is small for any pair merged into $G_j$. Let $\{G_i, G_{i_2}, \dots, G_{i_m}\} \subseteq \mathcal{G}^l$ be the collection of segments merged into $G_j$. Then by construction, 
\(
|\model(G_i) - \model(G_{i_t})| \leq \epsilon, \quad \forall t=2,\dots,m
\)
for some small $\epsilon > 0$ determined by the stopping criterion of the merge process. Since $\model(G_i) \geq \tau$, it follows that each $\model(G_{i_t}) \geq \tau - \epsilon$. Hence, the attribution score of the merged segment $G_j$ is approximately the average of semantically similar positives:
\(
\model(G_j) = \frac{1}{m}\sum_{t=1}^m \model(G_{i_t}) \geq \tau - \epsilon.
\)
If $\epsilon$ is sufficiently small (which is ensured by the design of the merging threshold), then $\model(G_j) \geq \tau' > 0$ for some \(\tau'\), implying $G_j$ is also labeled positive.
\end{proof}

Given an input image $\bm{x}$, the overall process for \ourmethod is summarized into Algorithm \ref{alg:hierarchical-owen}.

\begin{algorithm}[h]
\caption{Owen-based Semantics and Hierarchy-Aware Explanation (\ourmethod)}
\label{alg:hierarchical-owen}
\textbf{Input:} model $f(\cdot)$, an RGB Image $\bm{x}$, edge detection thresholds $T_{\text{lower}}$, $T_{\text{upper}}$
\newline
\textbf{Output:} Owen value for $x_i\in\bm{x}$
\begin{algorithmic}[1]
\STATE {\bf Step 1: initial segmentation via Canny edge detection}
\STATE Apply Gaussian smoothing to $\bm{x}$
\STATE Compute gradient magnitudes and directions. Apply non-maximum suppression to thin edges
\STATE Apply double thresholding with $T_{\text{lower}}$ and $T_{\text{upper}}$. Track edge by hysteresis to obtain edge-defined segments $\mathcal{G}$
\STATE {\bf Step 2: semantics-aware graph-based merging}
\STATE Initialize hierarchical structure $\mathcal{G}_L = \mathcal{G}$
\FOR{$l = L$ to $1$}
\STATE Construct a weighted graph $(V_l, E_l)$, where $V_l$ represents segments from $\mathcal{G}_l$, and edges $E_l$ connect spatially adjacent segments.
\FOR{each edge $e_{ij} \in E_l$}
\STATE Compute edge weight $ w(e_{ij}) = |\model(G_i) - \model(G_j)|$ based on model score differences
\ENDFOR
\STATE Merge segments in $\mathcal{G}_l$ using Eq. (\ref{eq:merge}) to form $\mathcal{G}_{l-1}$
\ENDFOR
\STATE {\bf Step 3: Owen value computation}
\FOR{each pixel $x_i$ at the pixel level $L$}
        \STATE Update the score $\score^{\text{O}}_i$ using Eq. (\ref{eq:Owenvalue-multi-level}).
\ENDFOR
\STATE \textbf{Return} Owen values $\score^{\text{O}}_i$ for all pixels
\end{algorithmic}
\end{algorithm}

\section{Experiments}
\label{sec:experiment}

We empirically evaluate \ourmethod across multiple datasets and baselines to validate the following: (1) the advantage of using Owen value over standard SHAP for hierarchical explanations, (2) the limitation of Owen value under improper segmentation, that does not satisfy the positive $\mathcal{T}$-property, (3) the computation efficiency of \ourmethod.

\textbf{Datasets.}\  
\ourmethod is evaluated on five publicly available image datasets and one tabular dataset to consider non-image data. 
For one-class image classification tasks, we use the following datasets. 
(1) {\bf MRI dataset}: The Brain Tumor MRI dataset~\cite{msoud_nickparvar_2021} with four categories represents critical applications in medical imaging. 
(2) {\bf Tiny-ImageNet dataset}: A subset of ImageNet with 200 classes~\cite{ILSVRC15}, offering a challenging benchmark for diverse life-like imagery. 
(3) {\bf ImageNet-S50 dataset}: A curated subset of ImageNet~\cite{gao2022luss} with 50 categories. The inclusion of benchmark object masks in this dataset makes it particularly suitable for evaluating XAI methods. 
To consider multi-class image classification tasks where images belong to several categories simultaneously, we extend our evaluation to the following. 
(4) {\bf CelebA dataset}: A human facial image dataset~\cite{liu2015faceattributes} with multiple attributes. 
(5) {\bf PASCAL-VOC-2012 dataset}: A widely-used benchmark~\cite{pascal-voc-2007} for multi-object detection and segmentation of life-like imagery. 
For non-image datasets, we utilize (6) \textbf{Adult Census Income dataset}~\cite{adult_2}, which consists of demographic and employment information for 48,842 individuals across 14 features.

\textbf{Baselines.}\  
We compare \ourmethod against a diverse set of baseline methods, encompassing the traditional SHAP framework and its recent variants designed to improve interpretability and computational efficiency.
(1) {\bf SHAP} \cite{lundberg2017unified}, the foundational method for Shapley value-based explanations, assumes feature independence in its attributions. 
(2) {\bf AA-SHAP} \cite{lundberg2017unified}, a variant of SHAP, incorporates predefined axis-aligned (AA) segmentation as implemented in the SHAP official library. 
(3) {\bf Gradient SHAP} \cite{lundberg2017unified} and (4) {\bf Integrated Gradients} \cite{sundararajan2017axiomatic} extend Shapley values by integrating gradient information, providing enhanced interpretability and computational efficiency. 
(5) {\bf Occlusion} \cite{zeiler2014visualizing}, a perturbation-based method, evaluates feature importance by systematically occluding input regions. 
(6) {\bf RISE} \cite{Petsiuk2018rise}, a randomized approach, generates feature attributions through masked input sampling, offering flexibility across diverse models. 
(7) {\bf h-SHAP} \cite{teneggi2022fast}, a hierarchical extension of SHAP that leverages structured attributions for improved consistency in hierarchical contexts. 

\textbf{Evaluation metrics.}\  
To assess the quality of the explanations, we employ four widely used metrics. (1) Energy-Based Pointing Game (EBPG) \cite{wang2020score} evaluates the alignment between explanations and ground-truth energy distributions. (2) Mean Intersection over Union (mIoU) assesses the overlap between predicted and ground-truth regions. (3) Bounding Box (Bbox) \cite{schulz2020restricting} measures the alignment of explanations with annotated bounding boxes. (4) Area Over the Perturbation Curve (AOPC) \cite{samek2016evaluating} measures the robustness of explanations by analyzing prediction changes under feature perturbations.

\textbf{Implementation details}.\ 
For each dataset, we fine-tune a ResNet model to achieve an overall classification accuracy exceeding 85\%, serving as the target model for explanation. Prior to explanation, all input images are resized to $224 \times 224$ and standardized using mean and standard deviation normalization. We then apply our semantics-aware hierarchical segmentation method. Specifically, the lower and upper thresholds for initial segmentation, $T_{\text{lower}}$ and $T_{\text{upper}}$, are set to the 75th and 90th percentiles, respectively. The hierarchical depth $L$ is determined adaptively, typically converging within 4 to 5 levels when all meaningful feature regions are fully merged. All experiments were conducted on a single NVIDIA RTX-4080-16GB.

\subsection{Evaluating the Limitations of Shapley-Based Attribution}

We begin by demonstrating the limitations of Shapley value-based methods in image data with high inter-feature correlations, using the Brain Tumor MRI dataset~\cite{msoud_nickparvar_2021}. In MRI images, tumor regions exhibit dark color patterns that contrast with surrounding normal brain tissue, leading to strong intra-region pixel correlations. However, SHAP’s assumption of feature independence fails to capture these dependencies, resulting in suboptimal explanations where attribution heatmaps are dispersed across the brain rather than focused on the tumor (see left panel of Fig. \ref{fig:SHAP-fail-tumor}). This reflects SHAP’s inability to model structured feature correlations, reducing interpretability in medical imaging and other domains with spatially or semantically dependent features.

\begin{figure}[H]
    \centering
    \vskip -0.1in
    \includegraphics[width = 1\linewidth]{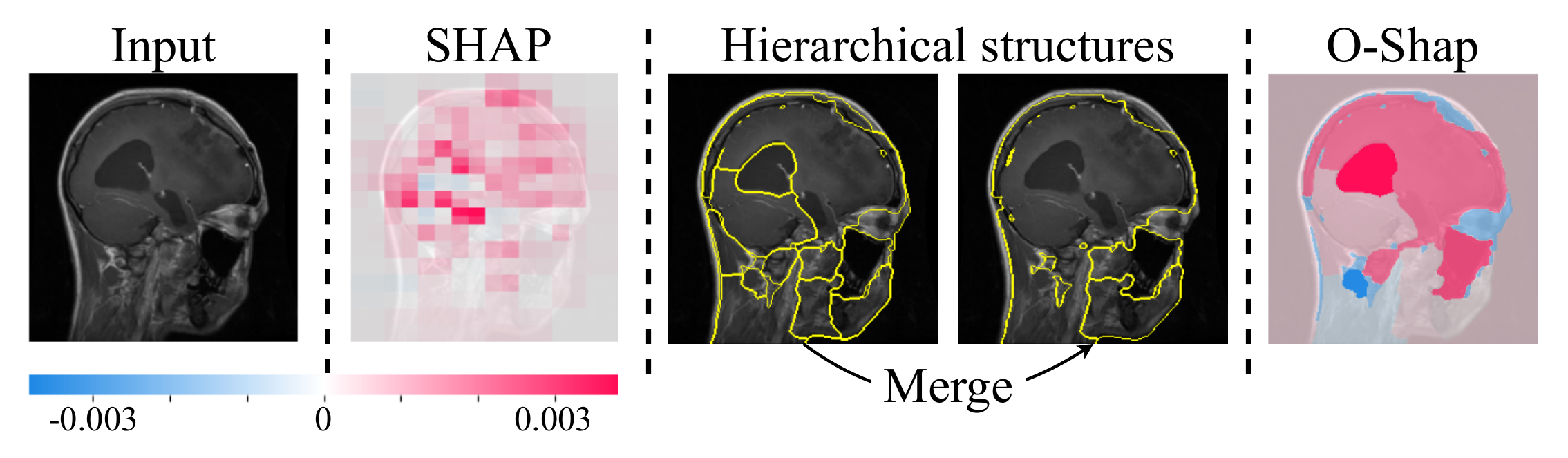}
    \vskip -0.1in
    \caption{Limitation of SHAP and a hierarchy-aware solution.}
    \vskip -0.1in
    \label{fig:SHAP-fail-tumor}
\end{figure}

To address this, our proposed \ourmethod constructs a multi-level feature structure on the tumor image, enabling correlated regions such as tumors to be treated as coalitions rather than independent pixels during Owen value computation. Fig. \ref{fig:SHAP-fail-tumor} (middle) visualizes the hierarchical layers, demonstrating how correlated regions are effectively grouped. As validated by Proposition \ref{cor:hierarchical_satisfaction}, \ourmethod satisfies hierarchical consistency, ensuring that the segmentation aligns with semantic structure. The resulting heatmaps (see right panel of Fig. \ref{fig:SHAP-fail-tumor}) show that O-Shap produces focused, interpretable explanations centered on the tumor, in contrast to SHAP’s scattered outputs. These results highlight the practical advantages of \ourmethod over SHAP in domains where feature correlations are critical.

\subsection{Evaluating the Role of Segmentation in \ourmethod}

To validate the importance of our proposed semantics-aware segmentation method in Section~\ref{sec:segmentation}, we conduct an ablation study comparing four variants: (1) SHAP (basic SHAP without segmentation), (2) AA-SHAP (with axis-aligned segmentation), (3) SLIC-SHAP (with SLIC segmentation), and (4) \ourmethod (with our proposed semantics-aware segmentation). Fig.~\ref{fig:segment_comp} presents qualitative results. The top three rows correspond to relatively simple images with clear object boundaries and minimal background noise, while the bottom three rows depict more complex and realistic scenarios. Among all variants, \ourmethod consistently produces the most accurate and interpretable heatmaps by localizing attributions to semantically meaningful regions while suppressing irrelevant artifacts. In contrast, AA-SHAP suffers from inflexible grid boundaries that poorly align with natural object contours, leading to coarse and misaligned attributions. SLIC-SHAP, though more adaptive, relies solely on low-level visual cues such as color and proximity, which can group semantically unrelated pixels and fragment cohesive features. These deficiencies result in scattered or diluted attributions, especially in high-correlation regions. Notably, both AA-SHAP and SLIC-SHAP fail to satisfy the $\mathcal{T}$-Property (Definition~\ref{def:T-property}), limiting their hierarchical consistency and undermining their interpretability. These findings underscore the critical role of segmentation in hierarchical attribution: incorporating semantics-aware, hierarchy-preserving segmentation significantly enhances explanatory coherence.

\begin{figure}[h]
    \begin{center}
    \vskip -0.1in
    \includegraphics[width = 0.95\linewidth]{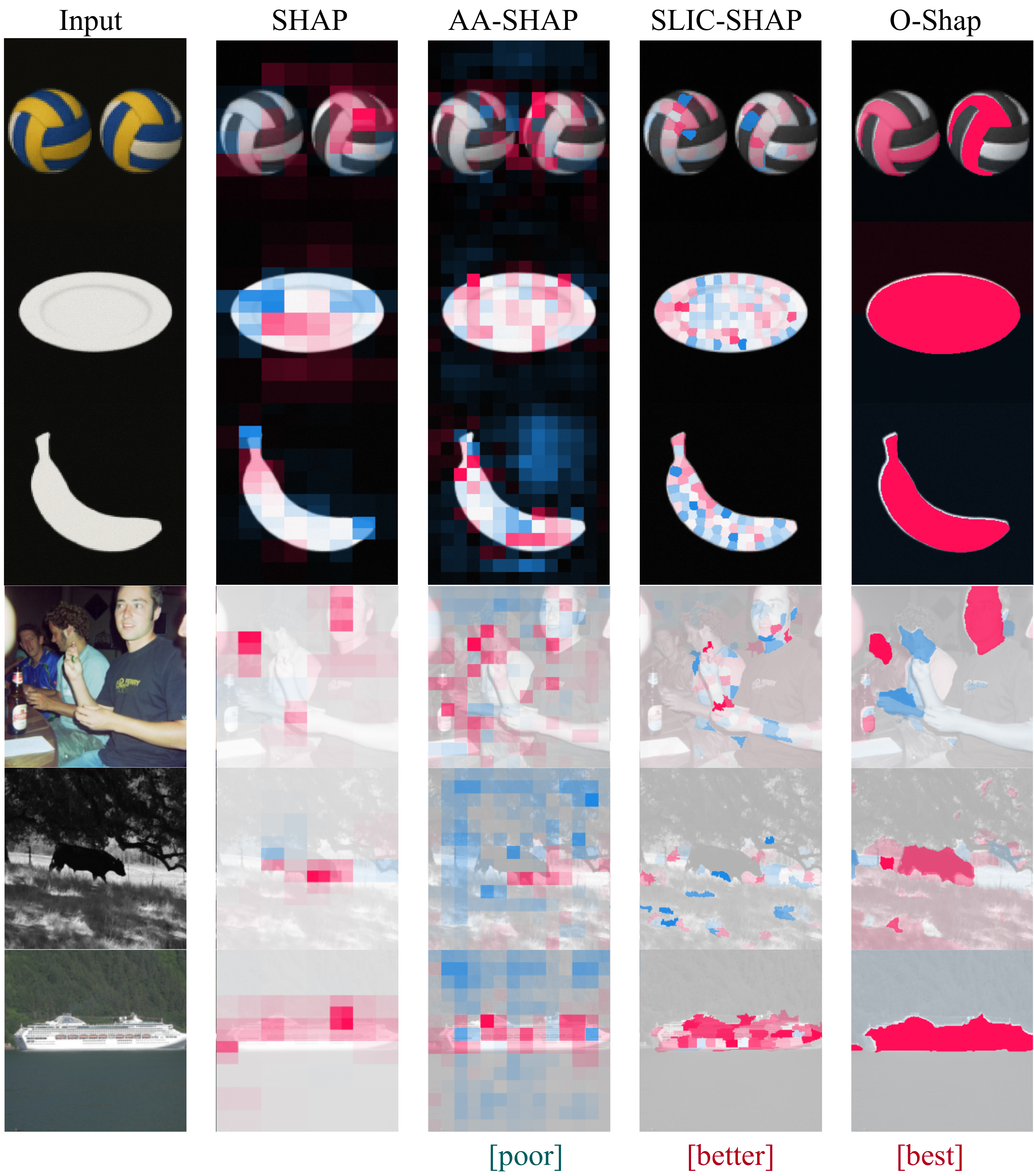}
    \end{center}
    \vskip -0.1in
    \caption{Explanation heatmaps under various segmentations.}
    \vskip -0.1in
    \label{fig:segment_comp}
\end{figure}

\subsection{Comprehension Comparison among Many Baselines on Various Image Dataset}

To comprehensively evaluate \ourmethod, we compare it against state-of-the-art baselines using ResNet50 on the ImageNet-S50 dataset. Fig. \ref{fig:compare-ImageNet-S50} shows explanation heatmaps, with rows representing image categories and columns corresponding to different methods. \ourmethod achieves the highest mIoU score, demonstrating its superior focus on target objects. In contrast, SHAP assumes feature independence, leading to scattered attributions. GradSHAP and Integrated Gradients improve efficiency but are influenced by gradient noise, while Occlusion captures object regions robustly but suffers from coarse masks. RISE provides diverse but imprecise explanations due to randomized sampling. h-SHAP improves SHAP through hierarchical structures but relies on fixed segmentations, limiting localization accuracy. By leveraging hierarchy-consistent segmentation and Owen value-based attributions, \ourmethod produces precise, context-aware explanations, reflected in its leading mIoU score.

\begin{figure}[ht]
    \centering
    \vskip -0.1in
    \includegraphics[width = 1\linewidth]{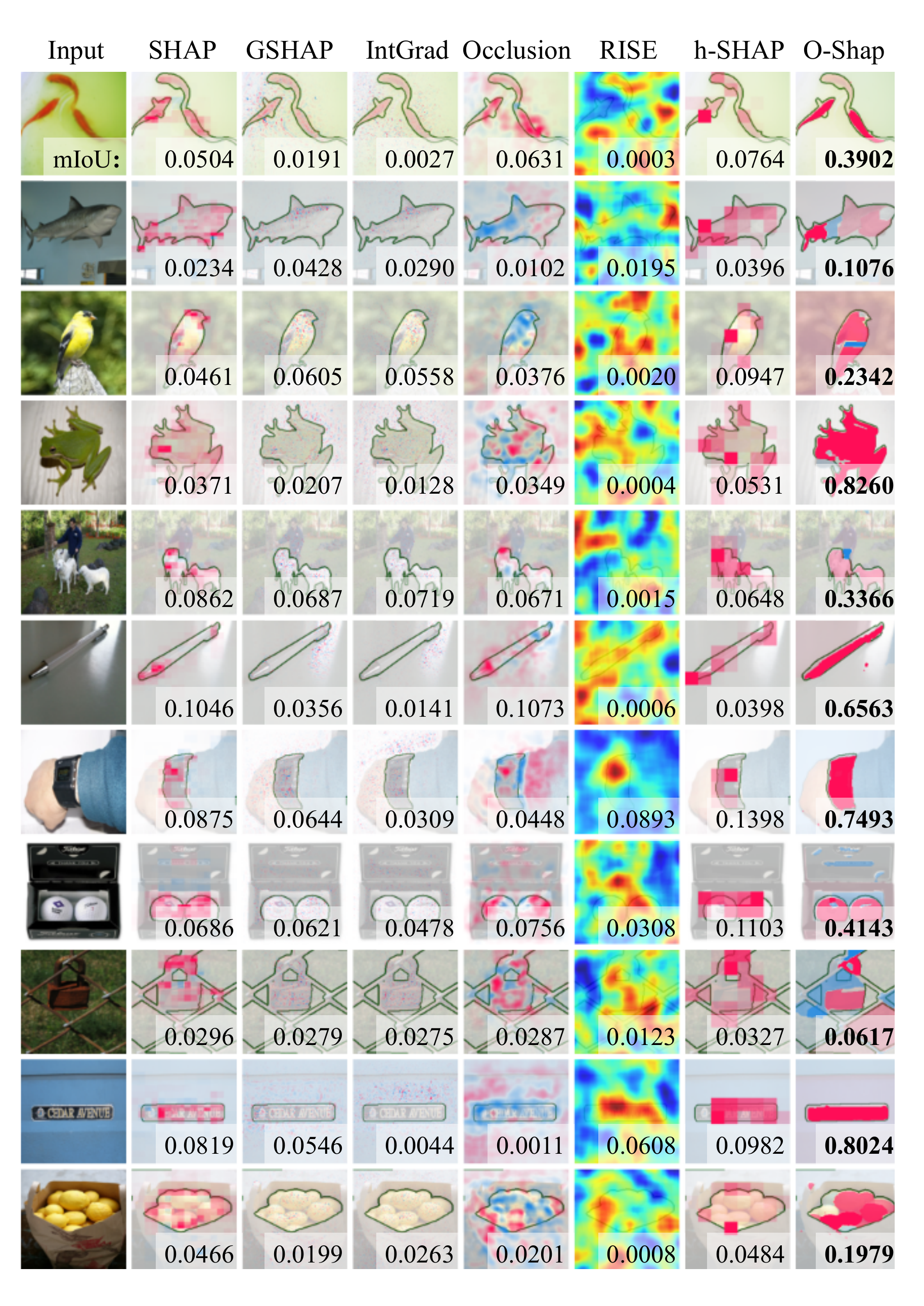}
    \vskip -0.1in
    \caption{Explanations in ImageNet-S50 with mIoU scores.}
    \vskip -0.15in
    \label{fig:compare-ImageNet-S50}
\end{figure}

To provide a more comprehensive quantitative comparison, Table \ref{metrics} compares the performance of \ourmethod with six baseline methods across five metrics for a ResNet50 classifier trained on the ImageNet-S50 dataset. \ourmethod achieves the highest mIoU score (0.3375), demonstrating its superior ability to generate precise and localized explanations that align closely with object boundaries. While SHAP achieves the second-best mIoU (0.2780), its assumption of feature independence limits its capacity to focus on high-correlation features effectively. For the EBPG metric, \ourmethod outperforms all methods with a score of 0.5692, leveraging its hierarchical structure to better align with the ground-truth energy distributions, while SHAP follows closely with 0.5659. Notably, SHAP achieves the highest Bbox score (0.6149), reflecting its strength in capturing bounding box-level explanations, with h-SHAP trailing as the second-best at 0.4937. However, \ourmethod lags in Bbox performance with a score of 0.4707, suggesting that there is room for improvement in aligning with bounding-box annotations. For F1 and AUC, \ourmethod shows the best overall performance (0.4172 and 0.5867, respectively). These results highlight the strength of \ourmethod in segmentation-based metrics such as mIoU while maintaining competitive performance in other metrics, showcasing its robustness and effectiveness in generating high-quality explanations.

\begin{table}[H]
\caption{Evaluation of XAI methods on ImageNet-S50.}
\vskip -0.05in
\label{metrics}
\begin{center}
\resizebox{\columnwidth}{!}{%
\begin{tabular}{cccccccc}
\toprule
\bf Metric & SHAP & GradSHAP & IntGrad & Occlusion & RISE & h-SHAP & \ourmethod\\ 
\toprule
\toprule
\bf EBPG & 0.5659 & 0.5044 & 0.5026 & 0.4700 & 0.2653 & 0.4779 & \bf{0.5692}\\
\bf mIoU & 0.2780 & 0.1650 & 0.1422 & 0.1794 & 0.1304 & 0.2248 & \textbf{0.3375}\\
\bf Bbox & \textbf{0.6149} & 0.3904 & 0.3646 & 0.4015 & 0.3729 & 0.4937 & 0.4707\\
\bf F1 & 0.4017 & 0.3076 & 0.3077 & 0.2904 & 0.3815 & 0.4109 & \bf{0.4172}\\
\bf AUC & 0.5588 & 0.5027 & 0.5027 & 0.4905 & 0.5000 & 0.5734 & \bf{0.5867}\\
\bottomrule
\end{tabular}
}
\end{center}

\end{table}

To evaluate \ourmethod's performance on multi-label classification tasks, we use the CelebA dataset \cite{liu2015faceattributes} and train a sequential CNN model. We focus on images that belong to two categories simultaneously: $Wearing\_Hat$ and $Eyeglasses$. These categories are challenging as they correspond to distinct regions of the human face, which requires the explanation method to accurately localize the features associated with each category. \ourmethod effectively addresses this challenge by generating heatmaps that concentrate on the respective areas: focusing on the hair region for the $Wearing\_Hat$ category and the eye region for the $Eyeglasses$ category. This precise localization highlights \ourmethod's ability to handle multi-label classification scenarios by leveraging its hierarchy-consistent segmentation and Owen value-based attributions. In contrast, baseline methods tend to produce scattered or overlapping attributions, failing to distinctly capture the features corresponding to each category. These results underscore \ourmethod's advantage in providing semantically meaningful and category-specific explanations for complex multi-label tasks.

\begin{figure}[ht]
    \begin{center}
    \includegraphics[width = 1\linewidth]{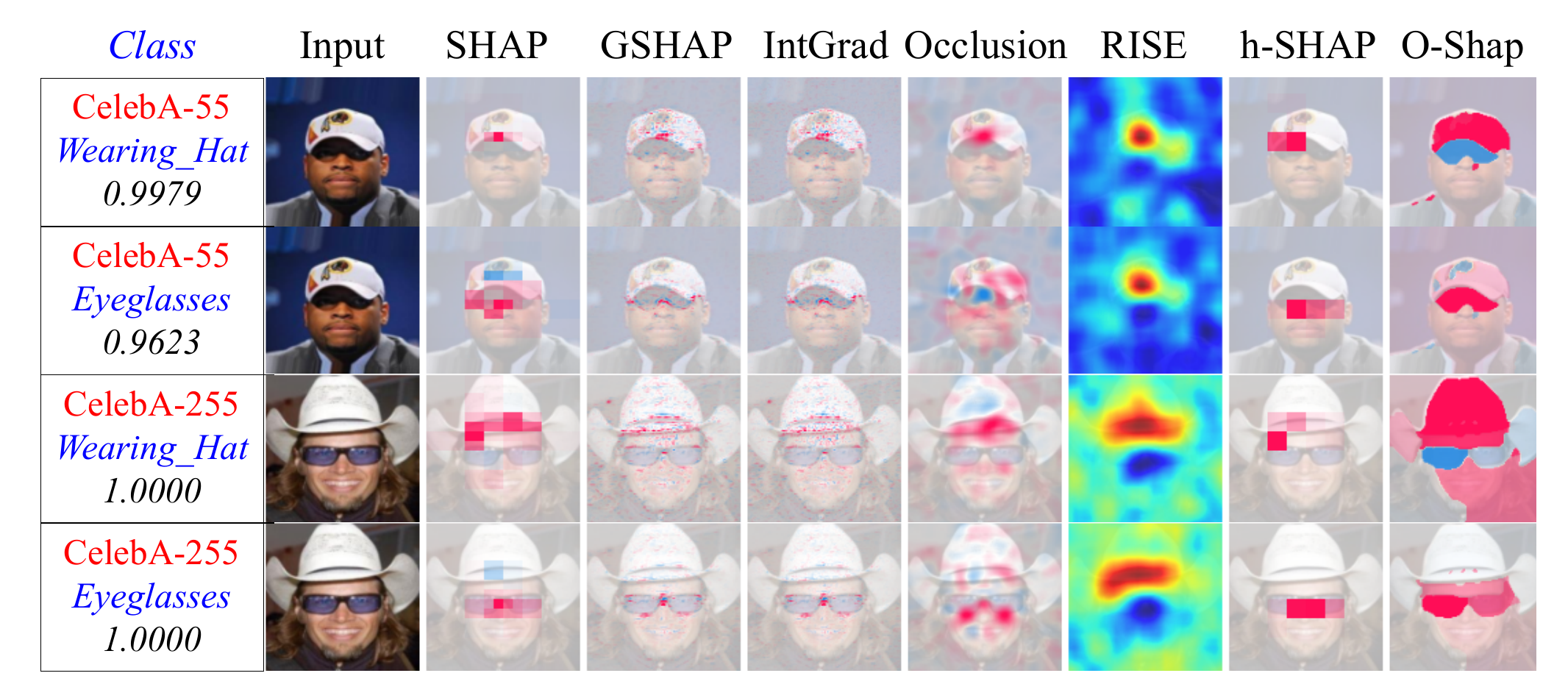}
    \end{center}
    \vskip -0.1in
    \caption{Comparison on a multi-label classification task on CelebA, classified by the ResNet50 model.}
    \vskip -0.1in
    \label{fig:CelebA}
\end{figure}

% \begin{figure}[H]
%     \begin{center}
%     \includegraphics[width = 1\linewidth]{Figures/SHAP.png}
%     \includegraphics[width = 1\linewidth]{Figures/IG.png}
%     \includegraphics[width = 1\linewidth]{Figures/GSHAP.png}
%     \end{center}
%     \caption{SHAP, Integrated Gradients and Gradient SHAP}
%     \label{fig:shap}
% \end{figure}

% \begin{figure}[H]
%     \begin{center}
%     \includegraphics[width = 1\linewidth]{Figures/Occlusion.png}
%     \includegraphics[width = 1\linewidth]{Figures/RISE.png}
%     \end{center}
%     \caption{Occlusion and RISE}
%     \label{fig:shap}
% \end{figure}

% \clearpage
% \begin{figure}[H]
%     \begin{center}
%     \includegraphics[width = 1\linewidth]{Figures/ResNet50onTinyImageNet_comparison.png}
%     \end{center}
%     \caption{ResNet50 on Tiny ImageNet}
%     \label{fig:tiny}
% \end{figure}

% \begin{figure}[H]
%     \begin{center}
%     \includegraphics[width = 1\linewidth]{Figures/CNNonBrainMRI_Cir_White_comparison.png}
%     \end{center}
%     \caption{CNN on Synthetic Brain MRI}
%     \label{fig:synthetic}
% \end{figure}

% \begin{figure}[H]
%     \begin{center}
%     \includegraphics[width = 1\linewidth]{Figures/ResNet50onTrafficLight_comparison.png}
%     \end{center}
%     \caption{ResNet50 on Lisa-Traffic-Light}
%     \label{fig:traffic}
% \end{figure}

\subsection{Evaluation on Execution Time}

Table \ref{execution time} compares the execution time of seven XAI methods across different ResNet models on the ImageNet-S50 dataset. The results highlight significant differences in scalability as model sizes increase. Notably, \ourmethod demonstrates a relatively slow growth in execution time, increasing only marginally from 2.36 seconds for ResNet18 to 2.55 seconds for ResNet101. This indicates that \ourmethod is computationally efficient and well-suited for larger models, maintaining consistent performance with minimal overhead. In contrast, methods like Occlusion and RISE exhibit steep increases in execution time, with Occlusion rising dramatically from 36.38 seconds for ResNet18 to 122.79 seconds for ResNet101, and RISE escalating from 2.82 seconds to 10.34 seconds. SHAP also shows notable growth, doubling its execution time from 2.00 seconds to 4.11 seconds across the same range. 
% While GradSHAP and IntGrad offer faster execution times for smaller models, their computational costs grow significantly for ResNet50 and beyond, indicating limited scalability. h-SHAP consistently maintains the lowest execution time across all model sizes but lacks the comprehensive segmentation and attribution advantages of \ourmethod. 
Overall, \ourmethod achieves a good balance between execution efficiency and explanation quality, making it a robust choice for large-scale applications.

\begin{table}[h]
\caption{Averaged execution time (second per image) of various ResNet models on $224\times224$ images.}
\vskip -0.05in
\label{execution time}
\begin{center}
\resizebox{\columnwidth}{!}{%
\begin{tabular}{cccccccc}
\toprule
     & SHAP & GradSHAP & IntGrad & Occlusion & RISE & h-SHAP & \ourmethod\\ 
\toprule
\toprule
% \bf CNN on Brain Tumor MRI & 1.72 & 0.46 & 0.33 & 16.02 & 0.70 & 0.17 & 2.57\\
ResNet18 & 2.00 & 0.85 & 0.56 & 36.38 & 2.82 & 0.24 & 2.36\\
ResNet34 & 3.28 & 0.84 & 0.58 & 59.90 & 4.32 & 0.24 & 2.38\\
ResNet50 & 3.65 & 1.82 & 1.25 & 88.14 & 5.80 & 0.48 & 2.52\\
ResNet101 & 4.11 & 2.12 & 1.21 & 122.79 & 10.34 & 0.48 & 2.55\\
\bottomrule
\end{tabular}
}
\end{center}
\vskip -0.2in
\end{table}

Table \ref{execution time over layers} compares the execution times of XAI methods as the image size increases, highlighting the scalability differences among the methods. The results reveal that the original SHAP algorithm’s execution time grows exponentially with the number of pixels (features), rising from 1.64 seconds for $32 \times 32$ images to 4.19 seconds for $256 \times 256$ images. This exponential growth reflects SHAP's computational complexity of $\mathcal{O}(2^{|N|})$, where $|N|$ is the number of pixels, underscoring its inefficiency for high-resolution images. In contrast, \ourmethod exhibits a much slower polynomial growth, with execution time increasing from 0.62 seconds for $32 \times 32$ images to 3.19 seconds for $256 \times 256$ images. This scalability advantage arises from its hierarchical structure, which reduces the computational complexity to $\mathcal{O}(2^{\sum_{l=1}^L |\mathcal{G}|^l})$. Specifically, for balanced partitions where $|\mathcal{G}^1| = |\mathcal{G}^2| = \cdots = |\mathcal{G}^L| = n$, the total complexity becomes $\mathcal{O}(|N|^{n\cdot\log_n{2}})$, making \ourmethod far more practical for larger images. GradSHAP and IntGrad demonstrate moderate growth rates but remain less efficient than \ourmethod for larger image sizes. These results demonstrate that \ourmethod effectively balances computational efficiency and explanation quality, particularly for high-resolution image data.

\begin{table}[H]
\caption{Averaged execution time (second per image) of ResNet50 model.}
\vskip -0.05in
\label{execution time over layers}
\begin{center}
\resizebox{\linewidth}{!}{%
\begin{tabular}{cccccccc}
\toprule
   Image size & SHAP & GradSHAP & IntGrad & Occlusion & RISE & h-SHAP & \ourmethod\\ 
\toprule
\toprule
$32 \times 32$ & 1.64 & 0.48 & 0.52 & 1.03 & 1.15 & 0.01 & 0.32\\
$96 \times 96$ & 1.86 & 0.62 & 0.62 & 12.04 & 1.46 & 0.26 & 0.62\\
$128 \times 128$ & 2.12 & 0.88 & 0.69 & 22.35 & 2.56 & 0.28 & 0.91\\
$224 \times 224$ & 3.65 & 1.82 & 1.25 & 88.14 & 5.80 & 0.48 & 2.52\\
$256 \times 256$ & 4.19 & 2.26 & 1.32 & 92.62 & 6.42 & 0.50 & 3.19\\
\bottomrule
\end{tabular}
}
\end{center}
\end{table}

\subsection{Sensitivity Study of Hyperparameters of Segmentation}

In \ourmethod, the initial segmentation consists of two key stages: (1) edge detection via gradient thresholding and (2) edge dilation to form closed feature blocks. The edge detection stage is controlled by two hyperparameters: the lower and upper gradient thresholds ($T_{\text{lower}}$, $T_{\text{upper}}$), while the dilation step is governed by the kernel size. To assess the robustness of our default configuration ($T_{\text{lower}} = 75$th percentile, $T_{\text{upper}} = 90$th percentile, dilation kernel size = $2 \times 2$), we conduct a sensitivity analysis, summarized in Fig.~\ref{fig:Segment_param}. The results indicate that our default hyperparameters yield the most coherent and semantically meaningful segmentations, with the optimal configuration highlighted by arrows in the figure. For example, adjusting the thresholds to $T_{\text{lower}} = 50$ and $T_{\text{upper}} = 90$ (as shown in the second column) leads to imprecise edge detection, compromising feature boundary clarity. Overall, this analysis confirms that the proposed default segmentation settings are empirically well-calibrated for producing reliable and interpretable explanations.

\begin{figure}[h]
    \begin{center}
    \vskip -0.1in
    \includegraphics[width = 0.9\linewidth]{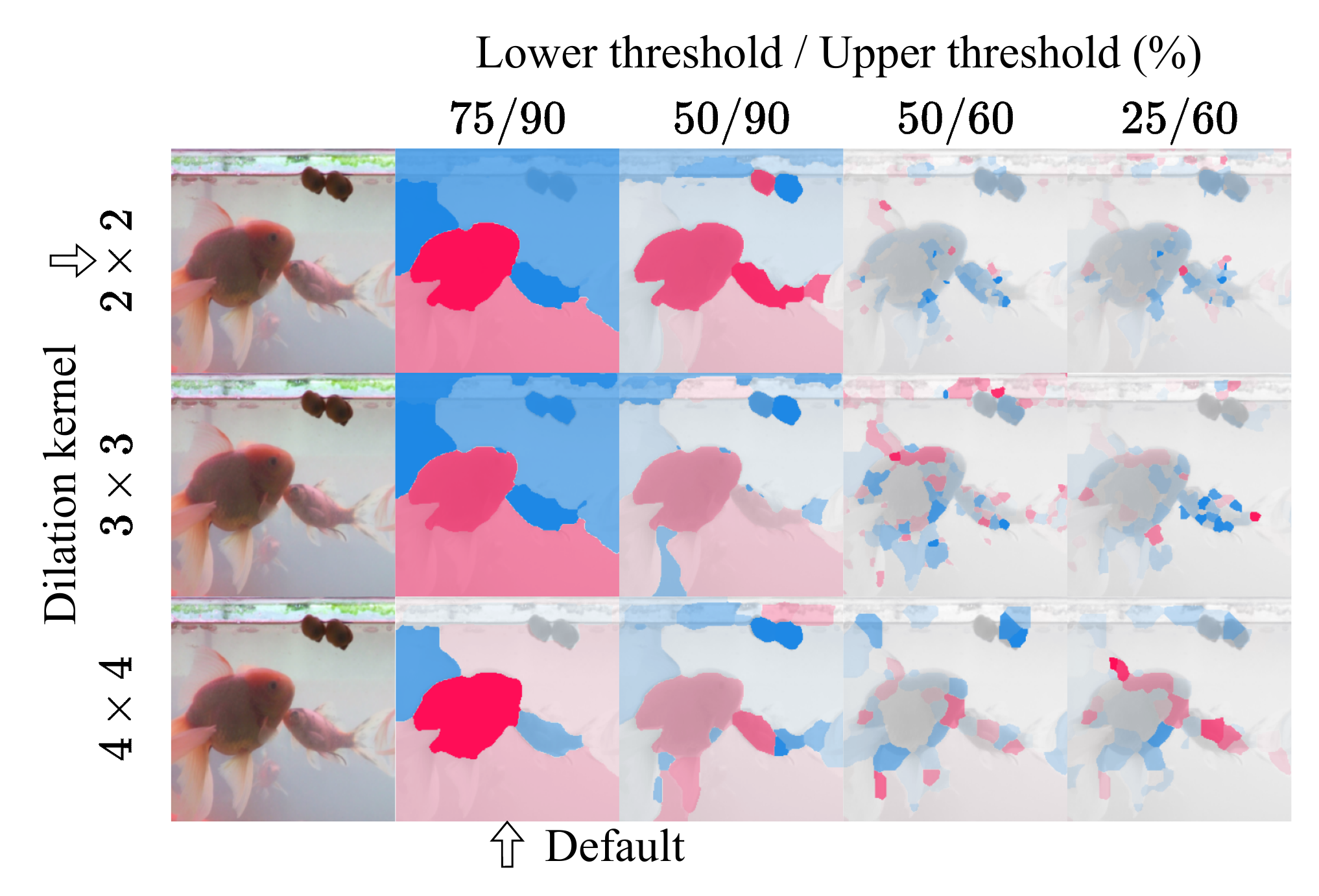}
    \end{center}
    \vskip -0.2in
    \caption{The impact of segmentation parameters on the result.}
    \label{fig:Segment_param}
    \vskip -0.2in
\end{figure}

\subsection{Evaluation on Non-image Dataset}
\label{sec:non-imageDataset}

We also evaluate \ourmethod in non-image dataset, using Adult Census Income dataset~\cite{adult_2}, which is a widely used benchmark in tabular learning. Our objective is to examine whether \ourmethod preserves the intrinsic dependencies among features in its attributions. Specifically, we construct graphs in Fig. \ref{fig:non-image_comp}, where edges represent correlations between features and their attributions. The results show that SHAP’s attribution graph fails to preserve these relationships, diverging from the true feature correlation structure. In contrast, \ourmethod effectively captures feature dependencies, producing an attribution graph that closely aligns with the original correlations. These findings highlight \ourmethod’s ability to enhance both attribution accuracy and interpretability, addressing a critical limitation of SHAP in non-image datasets.

\begin{figure}[h]
    \begin{center}
    \vskip -0.15in
    \includegraphics[width = 1\linewidth]{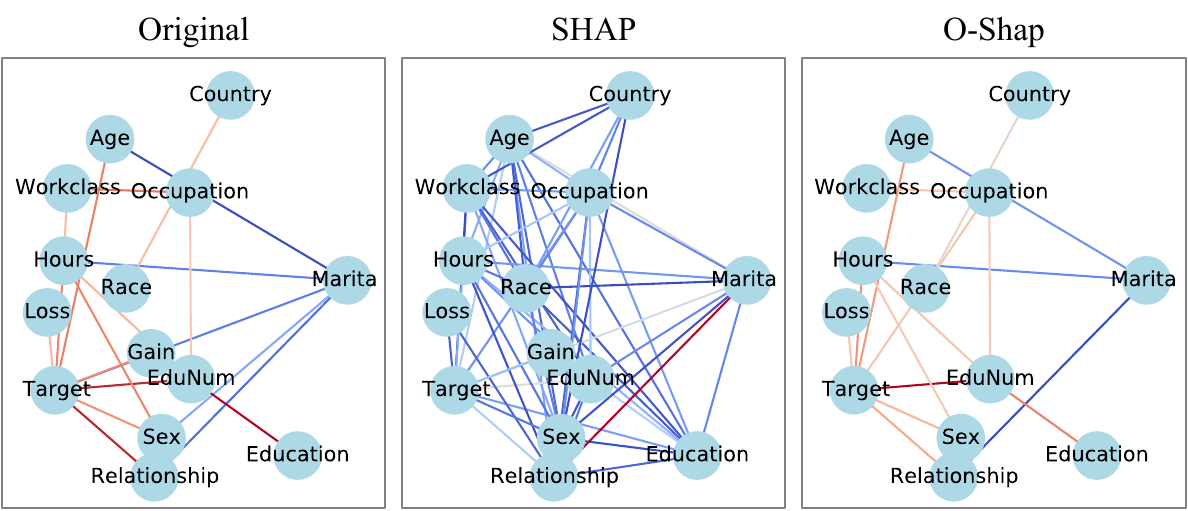}
    \end{center}
    \vskip -0.1in
    \caption{Comparison of correlations between original features, and feature explanation scores of SHAP and \ourmethod.}
    \label{fig:non-image_comp}
\end{figure}

\vspace{-0.5em}
\section{Conclusion}

We present \ourmethod, a model-agnostic, semantics- and hierarchy-aware explanation framework that addresses key limitations of Shapley-based methods in modeling feature dependencies. By leveraging the Owen value and a novel segmentation algorithm satisfying the \emph{positive $\mathcal{T}$-property}, \ourmethod ensures structurally consistent, interpretable, and efficient attributions. Our approach also reduces the exponential cost of SHAP to polynomial time, enabling scalability to high-dimensional domains. Experiments on image and tabular datasets confirm that \ourmethod yields more accurate and context-aware explanations than existing baselines. These results highlight the importance of structure-aware design in advancing trustworthy and scalable explainable AI.

\bibliographystyle{IEEEtran}
\bibliography{reference}

\end{document}